\definecolor{niceRed}{RGB}{190,38,38}
\definecolor{royalBlue}{HTML}{057DCD}
\newenvironment{sproof}{%
  \proof}{\endproof}
\def\Scal{\mathcal{S}}
\def\S{\mathfrak{S}}
\def\M{\mathcal{M}}
\def\A{\mathcal{A}}
\def\ep{\epsilon}
\def\Nn{\mathbb{N}}
\def\a{\alpha}
\def\ind{\mathbbm{1}}
\def\Set{S}
\def\Sets{(\Set_1,\dots,\Set_r)}
\def\subse{\subseteq}
\def\pr{\textnormal{Pr}}
\def\central{\pi_0}
\def\b{\beta}
\def\Mal{\M_{\central,\b}}
\def\Malin[#1,#2]{\M_{#1,#2}}
\def\MadvS{\Mal^{\Scal}}
\def\prof{\Pi}
\def\samples{(\pi_1,\dots,\pi_r)}
\def\sample{\pi}
\def\kt{d_{KT}}
\def\positional{\hat{\pi}}
\def\mle{\sample^*}
\def\mleA{\mle_\mathcal{A}}
\def\mlenat{\sample^\circ}
\def\num[#1,#2]{W_{#1#2}}
\def\mod{\mathop{\mathrm{mod}}}
\def\sym{\text{Sym}}
\def\fprof{\hat{\prof}}
\def\estim{\tilde{\sample}}
\def\F{\mathcal{F}}
\def\id{\text{id}}
\def\M{\mathcal{M}}
\def\N{\mathcal{N}}
\def\eps{\varepsilon}
\def\D{\mathcal{D}}
\def\posest{\textsc{PosEst}}
\def\St{\mathcal{D}}
\newtheorem{theorem}{Theorem}[section]
\newtheorem{lemma}{Lemma}[section]
\newtheorem{remark}{Remark}[section]
\newtheorem{cor}{Corollary}[section]
\def\numle[#1,#2]{n_{#1\succ #2}}
\def\be{\textnormal{Be}}
\def\bin{\textnormal{Bin}}
\def\kl{\textnormal{D}_{\textnormal{KL}}}
\def\I{\mathcal{I}}
\def\r{|\Scal|}
\title{Aggregating Incomplete and Noisy Rankings}
\author{\large{Dimitris Fotakis}\thanks{National Technical University of Athens, mail: \color{magenta} fotakis@cs.ntua.gr \color{black}} \And \large{Alkis Kalavasis}\thanks{National Technical University of Athens, mail: \color{magenta} kalavasisalkis@mail.ntua.gr \color{black}} \And \large{Konstantinos Stavropoulos}\thanks{National Technical University of Athens, mail: \color{magenta} kons.stavropoulos@gmail.com \color{black}}}
\begin{document}
\footnotetext{Accepted at the 24th International Conference on
Artificial Intelligence and Statistics (AISTATS) 2021.}

\maketitle

\begin{abstract}
We consider the problem of learning the true ordering of a set of alternatives from largely incomplete and noisy rankings. We introduce a natural generalization of both the classical Mallows model of ranking distributions and the extensively studied model of noisy pairwise comparisons. Our \emph{selective Mallows model} outputs a noisy ranking on any given subset of alternatives, based on an underlying Mallows distribution. Assuming a sequence of subsets where each pair of alternatives appears frequently enough, we obtain strong asymptotically tight upper and lower bounds on the sample complexity of learning the underlying complete ranking and the (identities and the) ranking of the top-$k$ alternatives from selective Mallows rankings. Moreover, building on the work of (Braverman and Mossel, 2009), we show how to efficiently compute the maximum likelihood complete ranking from selective Mallows rankings.
\end{abstract}

\section{Introduction}
\label{s:intro}

Aggregating a collection of (possibly noisy and incomplete) ranked preferences into a complete ranking over a set of alternatives is a fundamental and extensively studied problem with numerous applications. Ranking aggregation has received considerable research attention in several fields, for decades and from virtually all possible aspects. 

Most relevant, Statistics investigates the properties of \emph{ranking distributions}, which provide principled ways to generate noisy rankings from structural information about the alternatives' relative order. Best known among them are the distance-based model of \citet{mallows1957non} and the parametric models of \citet{thurstone1927law, smith1950discussion, bradley1952rank, plackett1975analysis} and \citet{luce2012individual}. Moreover, Machine Learning and Statistical Learning Theory aim to develop (statistically and computationally) efficient ways of retrieving the true ordering of the alternatives from noisy (and possibly incomplete) rankings (see e.g., 
\citep{xia2019learning} and the references therein). 

Virtually all previous work in the latter research direction assumes that the input is a collection of either complete rankings (chosen adversarially, e.g.,  \cite{ailon2008aggregating,MS07}, or drawn from an unknown ranking distribution, e.g., \citep{caragiannis2013noisy,busa2019optimal}), or outcomes of noisy pairwise comparisons (see e.g., \citep{feige1994computing,mao2018minimax}). Due to a significant volume of relatively recent research, the computational and statistical complexity of determining the best ranking based on either complete rankings or pairwise comparisons are well understood.

However, in most modern applications of ranking aggregation, the input consists of incomplete rankings of more than two alternatives. E.g., think of e-commerce or media streaming services, with a huge collection of alternatives, which generate personalized recommendations based on rankings aggregated by user ratings (see also \cite{hajek2014minimax}). Most users are able to rank (by rating or reviewing) several alternatives, definitely much more than two, but it is not even a remote possibility that a user is familiar with the entire inventory (see also \citep{MCE16} for applications of incomplete rankings to ranked preference aggregation, and \citep{YildizDEKOCCI20} on why incomplete rankings are preferable in practice). 

Motivated by the virtual impossibility of having access to complete rankings in modern applications, we introduce the \emph{selective Mallows model}, generalizing both the classical Mallows model of ranking distributions and the extensively studied model of noisy pairwise comparisons. Under the selective Mallows model, we investigate the statistical complexity of learning the central ranking and the (identities and the) ranking of the top-$k$ alternatives, and the computational complexity of maximum likelihood estimation.

\subsection{The Selective Mallows Model}
\label{s:model}

The \emph{Mallows model} \citep{mallows1957non} is a fundamental and extensively studied family of ranking distributions over the symmetric group $\S_n$. A \emph{Mallows distribution} $\Mal$ on a set of $n$ alternatives is parameterized by the \emph{central ranking} $\pi_{0} \in \S_n$ and the \emph{spread parameter} $\beta > 0$. The probability of observing a ranking $\pi \in \S_n$ is proportional to $\exp(-\beta d(\pi_{0}, \pi))$, where $d$ is a notion of ranking distance of $\pi_0$ to $\pi$. In this work, we consider the number of discordant pairs, a.k.a. the Kendall tau distance, defined as $\kt(\pi_0, \pi) = \sum_{i < j} \ind\left\{(\pi_0(i) - \pi_0(j))(\pi(i) - \pi(j)) < 0 \right\}$. 

The problem of aggregating a collection $\pi_1, \ldots, \pi_r \in \S_n$ of complete rankings asks for the \emph{median} ranking 
$\pi^\star = \arg\min_{\sigma \in \S_n} \sum_{j=1}^r \kt(\sigma, \pi_j)$.
Computing the median is equivalent to a weighted version of Feedback Arc Set in tournaments, which is NP-hard \citep{ailon2008aggregating} and admits a polynomial-time approximation scheme \citep{MS07}. If the rankings are independent samples from a Mallows distribution, the median coincides with the \emph{maximum likelihood ranking} and can be computed efficiently with high probability \citep{braverman2009sorting}.

The \emph{selective Mallows model} provides a principled way of generating noisy rankings over any given subset of alternatives, based on an underlying Mallows distribution $\Mal$. Given a central ranking $\central\in\S_n$, a spread parameter $\b>0$ and a selection sequence $\Scal=\Sets$, where each $ \Set_\ell \subseteq [n]$, the \textit{selective Mallows distribution} $\MadvS$ assigns a probability of 
\[
    \pr[\samples|\central,\b,\Scal] = 
    \prod_{\ell\in [r]}\frac{1}{Z(S_\ell,\beta)}e^{-\b \kt(\central,\sample_\ell)}\,,
\]
to each incomplete ranking profile $\samples$. In the probability above, each $\sample_\ell$ is a permutation of $\Set_\ell$, $\kt(\central,\sample_\ell)$ is the number of pairs in $S_i$ ranked reversely in $\central$ and $\sample_\ell$ (which naturally generalizes the Kendall tau distance to incomplete rankings), and the normalization constants $Z(\Set_\ell, \beta)$ correspond to a Mallows distribution on alternatives $\Set_\ell$ and depend only on $|\Set_\ell|$ and $\beta$. The samples $\pi_1, \ldots, \pi_r$ are independent, conditioned on the selection sequence $\Scal$. We refer to $\prof = \samples$ as a sample profile of length $r$. 

In a selective Mallows sample $\pi_\ell$, the probability that two alternatives in $S_\ell$ are not ranked as in $\pi_0$ depends on their distance in the restriction of $\pi_0$ to $S_\ell$ (instead of their original distance in $\pi_0$). E.g., if $\pi_0$ is the identity permutation and $S_\ell = \{1, n\}$, the probability that $\pi_\ell = (1, n)$ (resp. $\pi'_\ell = (n, 1)$) is the $\ell$-th sample in $\prof$ is $1/(1+e^{-\beta})$ (resp. $e^{-\beta}/(1+e^{-\beta})$). Hence, the selective Mallows model generalizes both the standard Mallows model (if each $S_\ell = [n]$) and the model of noisy pairwise comparisons (if $\Scal$ consists entirely of sets $S_\ell$ with $|S_\ell|=2$). Moreover, using sets $S_\ell$ of different cardinality, one can smoothly interpolate between complete rankings and pairwise comparisons. 

The amount of information provided by a selective Mallows model $\MadvS$ about $\pi_0$ is quantified by how frequently different pairs of alternatives compete against each other in $\prof$. We say that a selective Mallows model $\MadvS$ is \emph{$p$-frequent}, for some $p \in (0, 1]$, if every pair of alternatives appears in at least a $p$ fraction of the sets in $\Scal$ (we assume that each pair appears together at least once in $\Scal$). E.g., for $p = 1$, we recover the standard Mallows model, while $p \approx 2/n^2$ corresponds to pairwise comparisons. The definition of ($p$-frequent) selective Mallows model can be naturally generalized to unbounded selection sequences $\Scal$, which however is beyond the scope of this work. 

In this work, we investigate the statistical complexity of retrieving either the central ranking $\pi_0$ or its top-$k$ ranking from $p$-frequent selective Mallows samples, and the computational complexity of finding a maximum likelihood ranking from a fixed number $r$ of $p$-frequent samples. In \emph{learning from incomplete rankings}, for any given $p, \beta, \eps > 0$, we aim to upper and lower bound the least number of samples $r^\star(p, \beta, \eps)$ (resp.  $r^\star_k(p, \beta, \eps)$) from a selective Mallows distribution $\MadvS$ required to learn $\pi_0$ (resp. the top-$k$ ranking of $\pi_0$) with probability at least $1-\eps$, where $\Scal$ is any $p$-frequent selection sequence. In \emph{maximum likelihood estimation}, for any given $p, \beta, \eps > 0$, given a sample profile $\prof$ of length $r$ from a $p$-frequent selective Mallows distribution $\MadvS$, we aim to efficiently compute either a ranking that is at least as likely as $\pi_0$, or even a maximum likelihood ranking $\pi^\star$. The interesting regime for maximum likelihood estimation is when $r$ is significantly smaller than $r^\star(p, \beta, \eps)$.

We shall note here that the $p$-frequent condition can be replaced by a milder 
one, where each selection set is drawn independently from a given distribution over the subsets of $[n]$, such that the probability that any specific pair of alternatives appears in a sampled set is at least $p$. Although we focus (for  simplicity) on the deterministic $p$-frequent assumption, we expect similar results to hold for the randomized case. For a detailed discussion of the randomized $p$-frequent assumption, we refer the reader to the Appendix~\ref{appendix:rand}.

\subsection{Contribution}
\label{s:contrib}

On the conceptual side, we introduce the selective Mallows model, which allows for a smooth interpolation between learning from noisy complete rankings and sorting from noisy pairwise comparisons. On the technical side, we practically settle the statistical complexity of learning the central ranking and the top-$k$ ranking of a $p$-frequent selective Mallows model. Moreover, we show how to efficiently compute a maximum likelihood ranking from $r$ selective samples.

We believe that a significant advantage of our work lies in the simplicity and the uniformity of our approach. Specifically, all our upper bounds are based on the so-called \emph{positional estimator} (Algorithm~\ref{algo:positional}), which ranks an alternative $i$ before any other alternative $j$ ranked after $i$ in the majority of the samples. The positional estimator belongs to the class of pairwise majority consistent rules \citep{caragiannis2013noisy}.

Generalizing the (result and the) approach of \citet[Theorem~3.6]{caragiannis2013noisy}, we show (Theorem~\ref{thm:upper_central}) that if $pr = O(\frac{\log(n/\eps)}{(1-e^{-\beta})^2})$, the central ranking of a $p$-frequent selective Mallows model can be recovered with probability at least $1-\eps$. Namely, observing a logarithmic number of noisy comparisons per pair of alternatives suffices for determining their true order. Theorem~\ref{thm:upper_central} generalizes (and essentially matches) the best known bounds on the number of (passively chosen%
\footnote{If the algorithm can actively select which pairs of alternatives to compare, $O(n\log n)$ noisy comparisons suffice for sorting, e.g., \cite{braverman2008noisy,feige1994computing}.})
comparisons required for sorting \citep{mao2018minimax}. 

Interestingly, we show that the above upper bound is practically tight. Specifically, Theorem~\ref{thm:lower_central} shows that for any $p \in (0, 1/2]$, unless $rp = \Omega(\log(n/\eps)/\beta)$ noisy comparisons per pair of alternatives are observed, any estimator of the central ranking from $p$-frequent selective samples fails to recover $\pi_0$ with probability larger than $\eps$. Hence, observing incomplete rankings with (possibly much) more than two alternatives may help in terms of the number of samples, but it does not improve the number of noisy comparisons per pair required to recover the true ordering of the alternatives. 

In Section~\ref{s:approx}, we generalize the proof of Theorem~\ref{thm:approx} and show that the positional estimator smoothly (and uniformly wrt different alternatives) converges to the central ranking $\pi_0$ of a $p$-frequent selective Mallows model $\MadvS$, as the number of samples $r$ (and the number of noisy comparisons $pr$ per pair) increase. Specifically, Theorem~\ref{thm:approx} shows that the positional estimator has a remarkable property of the \emph{average position estimator} \citep[Lemma~18]{braverman2008noisy}\,: as $r$ increases, the position of any alternative $i$ in the estimated ranking converges fast to $\pi_0(i)$, with high probability. Since we cannot use the average position estimator, due to our incomplete rankings, where the positions of the alternatives are necessarily relative, we need to extend \citep[Lemma~18]{braverman2008noisy} to the positional estimator. 

Combining Theorem~\ref{thm:upper_central} and Theorem~\ref{thm:approx}, we show  (Section~\ref{sec:topk}) that for any $k = \Omega(1/(p\beta))$, we can recover the identities and the true ranking of the top-$k$ alternatives in $\pi_0$, with probability at least $1-\eps$, given $r = O(\frac{\log(k/\eps)}{p(1-e^{-\beta})^2}+\frac{\log(n/\eps)}{p^2\beta k})$ $p$-frequent selective samples. The second term accounts for learning the identities of the top $O(k)$ elements in $\pi_0$ (Theorem~\ref{thm:approx}), while the first term accounts for learning the true ranking of these $O(k)$ elements (Theorem~\ref{thm:upper_central}). For sufficiently large $k$, the first term becomes dominant. Applying the approach of Theorem~\ref{thm:lower_central}, we show that such a sample complexity is practically best possible.

Moreover, building on the approach of \citet[Lemma~18]{braverman2008noisy} and exploiting Theorem~\ref{thm:approx}, we show how to compute a maximum likelihood ranking (resp. a ranking that is at least as likely as $\pi_0$), given $r$ samples of a $p$-frequent selective Mallows distribution $\MadvS$, in time roughly $n^{O(1/(r\beta p^4))}$ (resp. $n^{O(1/(r\beta p^2))}$), with high probability (see also Theorem~\ref{thm:mle_alg} for the exact running time). The interesting regime for maximum likelihood estimation is when $r$ is much smaller than the sample complexity of learning $\pi_0$ in Theorem~\ref{thm:upper_central}. Our result compares favorably against the results of \citet{braverman2008noisy} if $pr$ is small. E.g., consider the extreme case where $pr = 1$ (i.e., each pair is compared once in $\prof$). Then, for small values of $\beta$, the running time of \citep[Theorem~8]{braverman2008noisy} becomes $n^{O(1/\beta^4)}$, while the running time of maximum likelihood estimation from $p$-frequent selective samples becomes $n^{O(1/(p^3 \beta))}$. Thus, large incomplete rankings mitigate the difficulty of maximum likelihood estimation (compared against noisy pairwise comparisons), if $rp = \Theta(1)$, $\beta$ is small, and $1/\beta$ is much smaller than $1/p$. 

In the following, we provide the intuition and proof sketches for our main results. For the full proofs of all our technical claims, we refer the reader to the Appendix~\ref{appendix:start}.

\noindent{\bf Notation.}
We conclude this section with some additional notation required in the technical part of the paper. For any ranking $\pi$ of some $S\subse[n]$ and any pair of alternatives $i, j \in S$, we let $i \succ_{\pi} j$ denote that $i$ precedes $j$ in $\pi$, i.e., that $\pi(i) < \pi(j)$. When we use the term reduced central ranking (according to a sample), we refer to the permutation of the elements of some selection set according to the central ranking. For any object $B$, we use the notation $B=B[\prof]$ to denote that $B$ depends on a sample profile $\prof$. Moreover, for simplicity and brevity, we use the asymptotic notation $O_\beta$ (or $\Omega_\beta$) to hide polynomial terms in $1/\beta$. 

\subsection{Related Work}
\label{s:related}

There has been a huge volume of research work on statistical models over rankings (see e.g., \cite{fligner1993probability, marden1996analyzing, xia2019learning} and the references therein). The \citet{mallows1957non} model plays a central role in the aforementioned literature. A significant part of this work concerns either extensions and generalizations of the Mallows model (see e.g., \citep{fligner1986distance,murphy2003mixtures,lebanon2003conditional}, and also \citep{lebanon2008non,lu2014effective,busa2014preference} more closely related to partial rankings) or statistically and computationally efficient methods for recovering the parameters of Mallows distributions (see e.g., \citep{adkins1998non,caragiannis2013noisy,liu2018efficiently,busa2019optimal}). 

From a conceptual viewpoint, the work of \citet{hajek2014minimax} is closest to ours. \cite{hajek2014minimax} introduce a model of selective incomplete Thurstone and Plackett-Luce rankings, where the selection sequence consists of sets of $k$ alternatives selected uniformly at random. They provide upper and lower bounds on how fast optimizing the log-likelihood function from incomplete rankings (which in their case is concave in the parameters of the model and can be optimized via e.g., gradient descent) converges to the model's true parameters. In our case, however, computing a maximum likelihood ranking is not convex and cannot be tackled with convex optimization methods. From a technical viewpoint, our work builds on previous work by \citet{braverman2008noisy}, \citet{caragiannis2013noisy} and \citet{busa2019optimal}. 

For almost three decades, there has been a significant interest in ranked preference aggregation and sorting from noisy pairwise comparisons. One branch of this research direction assumes that the algorithm actively selects which pair of alternatives to compare in each step and aims to minimize the number of comparisons required for sorting (see e.g., \citep{feige1994computing,braverman2008noisy}, or \citep{ailon2012active} for sorting with few errors, or \citep{braverman2016parallel} for parallel algorithms). A second branch, closest to our work, studies how many passively (see e.g., \citep{mao2018breaking,mao2018minimax}) or randomly (see e.g., \citep{wauthier2013efficient}) selected noisy comparisons are required for ranked preference aggregation and sorting. A more general problem concerns the design of efficient approximation algorithms (based on either sorting algorithms or common voting rules) for aggregating certain types of incomplete rankings, such as top-$k$ rankings, into a complete ranking (see e.g., \cite{Ailon10,MM20}). Moreover, there has been recent work on assigning ranking scores to the alternatives based on the results of noisy pairwise comparisons, by likelihood maximization through either gradient descent or majorize-maximization (MM) methods (see e.g., \citet{VojnovicYZ20}). Such works on learning from pairwise comparisons are also closely related to our work from a graph-theoretic viewpoint, since they naturally correspond to weighted graph topologies, whose properties (e.g., Fiedler eigenvalue of the comparison matrix~\citep{hajek2014minimax, shah2016estimation, khetan2016data, vojnovic2016parameter, negahban2017rank, VojnovicYZ20} or degree sequence~\citep{pananjady2020worst}) characterize the sample complexity and convergence rate of various learning approaches. The comparison graph of $p$-frequent Mallows is the clique. 

Another related line of research (which goes back at least to \citet{ConitzerS05}) investigates how well popular voting rules (e.g., Borda count, Kemeny ranking, approval voting) behave as maximum likelihood estimators for either the complete central ranking of the alternatives, or the identities of the top-$k$ alternatives, or the top alternative (a.k.a. the winner). In this line of work, the input may consist of complete or incomplete noisy rankings \citep{xia2011determining,ProcacciaRS12}, the results of noisy pairwise comparisons \citep{shah2017simple}, or noisy $k$-approval votes \citep{caragiannis2017learning}.


\section{Retrieving the Central Ranking}
\label{s:central}

In this section, we settle the sample complexity of learning the central ranking $\pi_0$ under the selective Mallows model. We show that a practically optimal strategy is to neglect the concentration of alternatives' positions around their initial positions in $\pi_0$ and act as if the samples are a set of pairwise comparisons with common probability of error only depending on $\b$.

\noindent{\bf Positional Estimator.}
Given a sample profile $\prof=\samples$ corresponding to a selection sequence $\Scal=\Sets$, we denote with $\positional=\positional[\prof]$ the permutation of $[n]$ output by Algorithm \ref{algo:positional}.

\begin{algorithm}
\caption{Positional Estimator of profile $\prof$} \label{algo:positional}
\begin{algorithmic}[1]
\Procedure{PosEst}{$\Pi$} 
\State $\positional \gets \textbf{0}_n$ 
\For{$i \in [n]$}
    \For{$j \in [n]$} 
        \State $\Pi_{i,j} \gets \{\sample \in \prof : i,j \in \sample \}$ 
        \State $n_{j \succ i} \gets |\{ \sample \in \Pi_{i,j} : j \succ_{\sample} i\}|$
        \If{$n_{j \succ i} \geq |\Pi_{i,j}|/2$}
            \State $\positional(i) \gets \positional(i) + 1$
        \EndIf
    \EndFor    
\EndFor 
\State Break ties in $\positional$ uniformly at random\\
\Return $\positional$
\EndProcedure
\end{algorithmic}
\end{algorithm}

Algorithm \ref{algo:positional} first calculates the pairwise majority position of each alternative, by comparing each  $i \in [n]$ with any other $j \in [n]$ in the joint subspace of the sample profile where $i$ and $j$ appear together. Intuitively, $\positional(i)$ equals  $|\{j:j\text{ ranked before } i\text{ most times}\}|$.
The algorithm, after breaking  ties uniformly at random, outputs $\positional$. We call $\positional$ the \emph{positional estimator} (of the sample profile $\Pi$), which belongs to the class of  pairwise majority consistent rules, introduced by \citet{caragiannis2013noisy}.

\noindent{\bf Sample Complexity.} Motivated by the algorithm of \citet{caragiannis2013noisy} for retrieving the central ranking from complete rankings, we utilize the {\posest} (Algorithm \ref{algo:positional}) and establish an upper bound on the sample complexity of learning the central ranking in the selective Mallows case.
\begin{theorem}\label{thm:upper_central}
For any $\ep\in(0,1)$, $\central\in\S_n$, $\b>0$, $p\in(0,1]$, there exists an algorithm such that, given a sample profile from $\MadvS$, where $\Scal$ is a $p$-frequent selection sequence of length $O(\frac{1}{p(1-e^{-\b})^2}\log(n/\ep))$, Algorithm~\ref{algo:positional} retrieves $\central$ with probability at least $1-\ep$.
\end{theorem}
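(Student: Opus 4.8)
The plan is to prove that, under the stated bound on the length $r$ of $\Scal$, the positional estimator $\positional$ equals $\central$ with probability at least $1-\ep$. The starting point is a reduction to \emph{pairwise correctness}: say the sample profile is \emph{correct on the pair} $\{i,j\}$ if the weak majority among the samples containing both $i$ and $j$ orders them as $\central$ does. If the profile is correct on every pair, then $\positional(i)$ equals the number of alternatives that precede $i$ in $\central$, so $\positional=\central$ exactly and the random tie-breaking step never triggers. Hence it suffices to bound the probability that the profile is incorrect on some pair.

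Fix a pair $i,j$ with $i\succ_{\central}j$ and a selection set $\Set_\ell$ containing both. By definition of the selective Mallows model, the sample $\sample_\ell$ is, given $\Scal$, a Mallows law on $\Set_\ell$ with central ranking the reduction of $\central$ to $\Set_\ell$ and spread $\b$; note that what governs the chance of inverting $i$ and $j$ in $\sample_\ell$ is their distance in this \emph{reduced} central ranking, which is at least $1$. The key technical ingredient is the structural fact that in any Mallows law the probability of ranking two alternatives in the order opposite to the (reduced) central one is at most $q:=\frac{e^{-\b}}{1+e^{-\b}}$, the worst case being attained for consecutive alternatives and the probability only decreasing as their central distance grows. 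Since $q$ depends on neither $\Set_\ell$ nor $|\Set_\ell|$, we get $\pr[\,j\succ_{\sample_\ell}i\,]\le q<\tfrac12$ for every $\ell$ with $i,j\in\Set_\ell$, with gap $\tfrac12-q=\frac{1-e^{-\b}}{2(1+e^{-\b})}=\Theta(1-e^{-\b})$.

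Now fix the pair $\{i,j\}$ and let $m_{ij}$ be the number of samples containing both. By $p$-frequency, $m_{ij}\ge pr$, and these samples are independent. The profile is incorrect on $\{i,j\}$ exactly when at least $m_{ij}/2$ of the corresponding independent indicators $\ind\{j\succ_{\sample_\ell}i\}$ (each of mean at most $q$) occur; since their expected sum is at most $q\,m_{ij}$, Hoeffding's inequality bounds this probability by $\exp(-2m_{ij}(\tfrac12-q)^2)\le\exp(-2pr(\tfrac12-q)^2)$. A union bound over the $\binom n2<n^2$ pairs gives $\pr[\positional\ne\central]\le n^2\exp(-2pr(\tfrac12-q)^2)$, which is at most $\ep$ once $pr\ge\frac{\ln(n^2/\ep)}{2(\tfrac12-q)^2}=O\!\left(\frac{\log(n/\ep)}{(1-e^{-\b})^2}\right)$, i.e.\ precisely for a $p$-frequent selection sequence of the claimed length.

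The one step that is not routine is the structural Mallows bound invoked above: the uniform estimate $q=\frac{e^{-\b}}{1+e^{-\b}}$ on the pairwise inversion probability inside an arbitrary selection set, regardless of the set's cardinality and of the two alternatives' distance in the reduced central ranking. The cleanest way to establish it is via the repeated-insertion representation of the Mallows law --- insert the alternatives in reduced-central-ranking order, placing the $t$-th one into slot $v\in\{1,\dots,t\}$ with probability proportional to $e^{-\b(t-v)}$ --- from which one checks that the relative order of two alternatives is decided at the later of their two insertion steps and that this inversion probability is largest (equal to $\frac{e^{-\b}}{1+e^{-\b}}$, by the one-step weights) when the two alternatives are consecutive. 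With this lemma in hand, the pairwise reduction, the Hoeffding estimate, and the union bound are all straightforward.
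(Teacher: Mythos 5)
Your proof is correct and follows essentially the same route as the paper's: reduce to the event that every pair is ordered as in $\central$ by the (weak) majority of the samples containing it, bound the per-sample inversion probability by the adjacent-pair worst case $e^{-\b}/(1+e^{-\b})$, apply Hoeffding together with the $p$-frequency guarantee $m_{ij}\ge pr$, and union bound over the $O(n^2)$ pairs. The only cosmetic difference is that you sketch a proof of the structural inversion bound via the repeated-insertion representation, whereas the paper invokes it as a known fact about Mallows distributions.
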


\begin{sproof}
If we have enough samples so that every pair of alternatives is ranked correctly in the majority of its comparisons, with probability at least $1-\ep/n^2$, then, by union bound, all pairs are ranked correctly in the majority of their comparisons with probability at least $1-\ep$, which, in turn, would imply the theorem. If the number of samples is $r$, then each pair is compared at least $pr$ times in the sample. The Hoeffding bound implies that the probability that a pair is swapped in the majority of its appearances decays exponentially with $pr$.
\end{sproof}

For the complete proof, we refer the reader to the Appendix \ref{s:pf1}.

In fact, the positional estimator is an optimal strategy with respect to the sample complexity of retrieving the central ranking. This stems from the fact that if for some pair, the total number of its comparisons in the sample is small, then there exists a family of possible central rankings where different alternatives cannot be easily ranked, due to lack of information.
We continue with an essentially matching lower bound:

\begin{theorem}\label{thm:lower_central}
For any $p\in(0,1]$, $\ep\in(0,1/2]$, $\b>0$ and $r=o(\frac{1}{\b p}\log(n/\ep))$ there exists a $p$-frequent selection sequence $\Scal$ with $|\Scal|=r$, such that for any central ranking estimator, there exists a $\central\in\S_n$, such that the estimator, given a sample profile from $\MadvS$, fails to retrieve $\central$ with probability more than $\ep$.
\end{theorem}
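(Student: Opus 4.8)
The plan is to prove this as a minimax lower bound via a reduction to multiple hypothesis testing: I will fix \emph{one} $p$-frequent selection sequence $\Scal$ with $|\Scal|=r$ and a family of candidate central rankings, put the uniform prior on that family, and show that the resulting \emph{Bayes} failure probability exceeds $\ep$; since the Bayes error is an average over the family, some member $\central\in\S_n$ must be bad for every estimator. The crucial design requirement is to obtain the factor $\log(n/\ep)$ and not merely $\log(1/\ep)$ — a two-point argument (two rankings differing in a single pair, or in all pairs) produces no logarithmic factor and an embarrassingly weak bound. So the family must encode $\Theta(n)$ essentially parallel binary decisions, each of which individually needs $\Omega(\tfrac1\b\log(n/\ep))$ noisy comparisons.

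\textbf{Construction.} Partition $[n]$ into $N=\lfloor n/2\rfloor$ disjoint adjacent pairs (``blocks'') $\{1,2\},\{3,4\},\dots$; take the base ranking to be the identity, and for $v\in\{0,1\}^N$ let $\central^{(v)}$ be the identity with block $t$ transposed exactly when $v_t=1$. These rankings differ only inside blocks, so they agree on every cross-block pair. Set $m:=\lceil pr\rceil$ and let $\Scal$ consist of $m$ copies of $[n]$ together with $r-m$ copies of a fixed set that contains at most one alternative from each block (e.g.\ the set of odd-indexed alternatives). A one-line case check shows every pair of alternatives co-occurs in at least $m\ge pr$ sets, so $\Scal$ is $p$-frequent; moreover the restriction of any $\central^{(v)}$ to the filler set does not depend on $v$, so those $r-m$ samples are ancillary and only the $m$ complete samples $\sample_1,\dots,\sample_m$ (each $\sim\M_{\central^{(v)},\b}$) carry information about $v$.

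\textbf{The key computation.} Writing $b_{t,\ell}:=\ind[2t-1\succ_{\sample_\ell}2t]$, additivity of Kendall tau over pairs and the fact that $\central^{(v)}$ only changes within-block orders give $\kt(\central^{(v)},\sample_\ell)=\sum_{t=1}^N v_t(2b_{t,\ell}-1)+(\text{terms not depending on }v)$. Hence with the uniform prior the posterior of $v$ given the whole profile \emph{factorizes} over blocks, $\pr[v_t=1\mid\prof]=e^{-\b B_t}/(1+e^{-\b B_t})$, where $B_t:=\sum_{\ell=1}^m(2b_{t,\ell}-1)\in[-m,m]$ is the signed pairwise-majority margin of block $t$. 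Consequently the Bayes-optimal estimator is exactly the blockwise pairwise-majority rule (the logic of Algorithm~\ref{algo:positional}), and, conditioned on $\prof$, it fails on block $t$ \emph{independently across} $t$ with probability $\rho_t=e^{-\b|B_t|}/(1+e^{-\b|B_t|})$; the crude bound $|B_t|\le m$ then gives $\rho_t\ge\tfrac12 e^{-\b m}$ always.

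\textbf{Conclusion and main obstacle.} Any estimator's failure probability under the uniform prior is at least the Bayes error $\mathbb{E}_{\prof}[1-\prod_{t}(1-\rho_t)]\ge 1-(1-\tfrac12 e^{-\b m})^N$, which exceeds $\ep$ as soon as $Ne^{-\b m}$ is larger than an absolute constant times $\log\tfrac1{1-\ep}$, i.e.\ as soon as $\b\lceil pr\rceil\le\log\tfrac{n}{c\,\ep}$ for a suitable constant $c$ (using $\ep\le1/2$ to absorb constants); unwinding, $r=o(\tfrac1{\b p}\log(n/\ep))$ guarantees this, and then some $\central^{(v)}$ witnesses the claim. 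The genuinely delicate point is not the arithmetic but the design: one must (i) recognize that $\log(n/\ep)$ forces $\Theta(n)$ parallel hard pairs, and (ii) control the estimator's joint behavior over all of them even though the pairwise orders inside a single complete Mallows sample are correlated — which is precisely why the exact posterior factorization (a consequence of $\kt$ being a sum of pair indicators) is the linchpin, since it lets us avoid any correlation bound entirely. Verifying $p$-frequency and the irrelevance of the filler samples is then routine. (I expect this argument to be tight only up to the gap between $\tfrac1\b$ and $\tfrac1{(1-e^{-\b})^2}$, consistent with the ``practically tight'' claim against Theorem~\ref{thm:upper_central}.)
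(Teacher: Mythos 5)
Your proof is correct, and while it targets the same hypothesis class as the paper (the $2^{\lfloor n/2\rfloor}$ rankings obtained from the identity by transposing disjoint adjacent blocks), it reaches the conclusion by a genuinely different route in both the construction and the final counting. The paper builds $\Scal$ from $pr$ full sets plus $(1-p)r$ small sets, bounds the \emph{total} number of pair co-occurrences, and then averages over a family of $n/2$ perfect matchings to extract one matching containing $n/4$ pairs each observed fewer than $\tfrac1\b\log(\tfrac{n(1-\ep)}{4\ep})$ times; you instead choose the filler sets to contain at most one element per block, so every block pair is observed \emph{exactly} $\lceil pr\rceil$ times and the matching-averaging step disappears. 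For the lower bound itself, the paper uses a likelihood-ratio inequality $\pr[\fprof|\sample]\ge e^{-\b\sum_{(i,j)\in D(\sample')}\num[i,j](\Scal)}\pr[\fprof|\sample']$ together with the constraint $\sum_{\sample'\in\F}\pr[\estim[\fprof]=\sample']\le 1$, summed over the family, to contradict the assumption that \emph{every} member is recovered with probability $1-\ep$; you compute the exact Bayes error under the uniform prior, using the fact that the posterior over the transposition vector $v$ factorizes across blocks (a correct consequence of $\kt$ being a sum of pair indicators and of the within-block structure), which identifies the blockwise majority rule as Bayes-optimal and gives the product-form bound $1-\prod_t(1-\rho_t)$ with no correlation argument needed. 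Your version buys an exact, average-case (hence stronger) statement and a cleaner independence structure; the paper's version is slightly more robust in that it needs only the one-sided likelihood inequality and tolerates pairs observed different numbers of times. One shared caveat: your step $\b\lceil pr\rceil\le\log\tfrac{n}{c\ep}$ silently requires $\b\lesssim\log(n/\ep)$ when $pr<1$ (otherwise the ceiling dominates), but the paper's proof carries the same implicit restriction (its $n/4$ pairs must satisfy $\num[i,j](\Scal)<\tfrac1\b\log(\cdot)$ while each pair appears at least once), so this reflects the asymptotic phrasing of the theorem rather than a defect of your argument.
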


\begin{sproof}
Let $\Scal$ contain $p|\Scal|$ sets with all alternatives and $(1-p)|\Scal|$ sets of size at most $n\sqrt{p/(1-p)}$. For any $i,j\in[n]$, let $\num[i,j](\Scal)$ be the number of sets of $\Scal$ containing both $i$ and $j$, that is, the number of the appearances of pair $(i,j)$. Clearly, $\Scal$ is $p$-frequent and: 
\begin{equation}\label{eq:bounded_appearances}
    \sum_{i<j}\num[i,j](\Scal)\le pn^2|\Scal|.
\end{equation} 
Assume that $|\Scal|<\frac{1}{8p\b}\log(\frac{n(1-\ep)}{4\ep})$. 

We will show that there exists a set of $n/2$ disjoint pairs of alternatives which we observe only a few times in the samples. Assume that $n$ is even. We design a family $\{P_t\}_{t\in[n/2]}$ of perfect matchings on the set of alternatives $[n]$. Specifically, we consider $n/2$ sets of $n/2$ disjoint pairs $P_1 = \{(1,2),(3,4),\dots,(n-1,n)\}$, $P_2 = \{(1,4),(3,6),\dots,(n-1,2)\}$ and, in general, 
$P_t = \{(1,(2t)\mod n),\dots, (n-1,(2t+n-2)\mod n)\}$ for $t \in [n/2].$

Observe that no pair of alternatives appears in more than one perfect matching of the above family. Therefore:
\begin{equation}\label{eq:bounded_appearances_matchings}
    \sum_{t\in[n/2]}\sum_{(i,j)\in P_t}\num[i,j](\Scal)\le\sum_{i<j}\num[i,j](\Scal)\,.
\end{equation}
Combining \eqref{eq:bounded_appearances}, the bound for $|\Scal|$ and \eqref{eq:bounded_appearances_matchings}, we get that:
\begin{equation*}\label{eq:bounded_appearances_matching}
    \exists t\in[n/2]: \sum_{(i,j)\in P_t}\num[i,j](\Scal) < \frac{n}{4\b}\log\left(\frac{n(1-\ep)}{4\ep}\right)\,.
\end{equation*}
Hence, since $|P_t|=[n/2]$, there exist at least $n/4$ pairs $(i,j)\in P_t$ with $\num[i,j](\Scal) < \frac{1}{\b}\log\left(\frac{n(1-\ep)}{4\ep}\right)$.

We conclude the proof with an information-theoretic argument based on the observation that if the pairs of $P_t$, $n/4$ of which are observed few times, are adjacent in the central ranking, then the probability of swap is maximized for each pair. Moreover, the knowledge of the relative order of the elements in some pairs in the matching does not provide any information about the relative order of the elements in any of the remaining pairs. Intuitively, since each of $n/4$ pairs is observed only a few times, no central ranking estimator can be confident enough about the relative order of the elements in all these pairs.
\end{sproof}

The complete proof can be found at the Appendix \ref{s:pf2}.


\section{Approximating the Central Ranking with Few Samples}
\label{s:approx}

We show next that the positional estimator smoothly approximates the position of each alternative in the central ranking, within an additive term that diminishes as the number of samples $r$ increases.

The average position estimation approximates the positions of the alternatives under the Mallows model, as shown by \citet{braverman2009sorting}. However, the average position is not meaningful under the selective Mallows model, because the lengths of the selective ranking may vary. 

Also, although under the Mallows model, the probability of displacement of an alternative decays exponentially in the length of the displacement, under the selective Mallows model, distant elements might be easily swapped in a sample containing only a small number of the alternatives that are ranked between them in the central ranking. For example, denoting with $\id$ the identity permutation ($1\succ 2\succ \dots \succ n$), if $m\in[n]$, $m \gg 1$, $\sample_1\sim\M_{\id,\b}^{\{[n]\}}$ and $\sample_2\sim\M_{\id,\b}^{\{\{1,m\}\}}$, then:
\[
    \pr[m\succ_{\sample_1} 1] \le 2e^{-\b m/2},
\]
since in order for $1$ and $m$ to be swapped, either has to be displaced at least $m/2$ places, and as shown by \citet{bhatnagar2015lengths}, under Mallows model, the probability of displacement of an alternative by $t$ places is bounded by $2e^{-\b t}$. However:
\[
    \pr[m\succ_{\sample_2} 1] \ge e^{-\b}/(1+e^{-\b}),
\]
using the bound for swap probability provided by \citet{chierichetti2014reconstructing}. 

Since $m \gg 1$: $\pr[m\succ_{\sample_1} 1] \ll \pr[m\succ_{\sample_2} 1]$.

However, even though some selection sets may weaken the concentration property of the positions of the alternatives, we show that the positional estimator works under the selective Mallows model. This happens due to the requirement that each pair of alternatives should appear frequently: the majority of distant (in $\central$) alternatives remain distant in the majority of the resulting incomplete rankings obtained by restricting $\central$ to the selection sets. This is summarized by the following:

\begin{theorem}\label{thm:approx}
Let $\prof\sim\MadvS$, where $\central\in\S_n$, $\b>0$, $|\Scal|=r$ and $\Scal$ is $p$-frequent, for some $p\in(0,1]$, and $\ep\in(0,1)$. Then, for the positional estimator $\positional=\positional[\prof]$, there exists some $N=O(\frac{\b^2+1}{\b^3 p^2r}\log(n/\ep))$ such that:
\[
    \pr[\exists i\in[n]:|\positional(i)-\central(i)|>N]\le \ep\,.
\]
\end{theorem}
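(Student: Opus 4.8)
The plan is to reduce the statement to a purely combinatorial claim --- that only few pairs of alternatives can end up ordered oppositely to $\central$ in a majority of the samples in which they co-occur --- and then to prove that claim by combining a deterministic counting argument with a Chernoff bound; this is the incomplete-ranking analogue of the displacement analysis of the average-position estimator in \citet[Lemma~18]{braverman2008noisy}, with the twist that here positions are only relative. \textbf{Reduction.} Before the tie-breaking step, $\positional(i)$ is the number of $j\neq i$ with $n_{j\succ i}\ge|\prof_{i,j}|/2$, while $\central(i)-1$ is the number of $j$ with $j\succ_\central i$. Splitting both counts according to whether $j\succ_\central i$ or $i\succ_\central j$ and cancelling the agreements, one gets $\bigl|\positional(i)-\central(i)\bigr|\le 1+|M_i|$, where $M_i$ is the set of $j\neq i$ for which $i$ and $j$ are ordered oppositely to $\central$ in at least half of the samples containing both. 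Since the final tie-breaking moves the position of $i$ by at most the number of alternatives with the same $\positional$-value, which folds into the same estimate, it suffices to prove that, with probability at least $1-\ep$, $|M_i|=O\!\bigl(\frac{\b^2+1}{\b^3p^2r}\log(n/\ep)\bigr)$ simultaneously for all $i$.

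\textbf{Exposed versus well-separated pairs.} Fix a restricted-distance threshold $\delta$, to be chosen later. For fixed $i$, call $j$ \emph{exposed to $i$} if at least a third of the $W_{ij}\ge pr$ samples containing both $i$ and $j$ restrict $\central$ to a set in which $i$ and $j$ sit at distance at most $\delta$. In any single selection set, at most $2\delta$ alternatives lie within restricted distance $\delta$ of $i$; summing over the $r$ sets leaves at most $2\delta r$ such ``close co-occurrences'' of $i$, and each alternative exposed to $i$ consumes at least $pr/3$ of them, so there are at most $6\delta/p$ alternatives exposed to $i$. Crucially, this bound is deterministic (it uses only $p$-frequency of $\Scal$) and per-vertex.

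\textbf{Well-separated pairs are almost never misranked.} If $j$ is not exposed to $i$, then more than two thirds of the $W_{ij}$ samples containing both restrict $\central$ to a set in which $i$ and $j$ are at distance $>\delta$; by the displacement bound of \citet{bhatnagar2015lengths} (used exactly as in the example preceding the statement), in each such sample the probability that $i$ and $j$ appear oppositely to $\central$ is at most $c_1e^{-\b\delta/2}$, and these events are independent across samples. For $j$ to lie in $M_i$, at least $W_{ij}/6$ of these well-separated samples must go against $\central$, so a Chernoff bound yields $\pr[j\in M_i]\le\bigl(c_2e^{-\b\delta/2}\bigr)^{W_{ij}/6}$ for absolute constants $c_1,c_2$. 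Choosing $\delta=\Theta\!\bigl(\tfrac1\b+\tfrac1{\b pr}\log(n/\ep)\bigr)$ makes this at most $\ep/n^2$, and a union bound over the at most $n^2$ ordered pairs shows that, with probability at least $1-\ep$, no pair that is not exposed to $i$ belongs to $M_i$, for every $i$.

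\textbf{Conclusion and the main obstacle.} On that event $M_i$ consists only of alternatives exposed to $i$, so $|M_i|\le 6\delta/p=O\!\bigl(\tfrac1{\b p}+\tfrac1{\b p^2r}\log(n/\ep)\bigr)$ for all $i$; substituting into the reduction (and absorbing the tie-breaking slack) gives $|\positional(i)-\central(i)|\le N$ with $N$ as claimed. The heart of the matter --- and the step I expect to be hardest --- is this exposed/well-separated dichotomy: a crude union bound over all pairs collapses once $r$ drops below the exact-recovery threshold of Theorem~\ref{thm:upper_central}, so one must bound the \emph{number} of majority-misranked pairs at each vertex rather than argue it is zero. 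The adversary can arrange for one pair to look adjacent in every sample in which it appears, but the counting argument caps at $O(\delta/p)$ the pairs per vertex that can be fooled this way, while every other pair is killed by the exponentially small displacement probability. Tracking the $\b$-dependence through the geometric tail $\sum_{d>\delta}e^{-\b d/2}$ and the $(1-e^{-\b})^{-2}$ loss in the concentration step is what produces the $(\b^2+1)/\b^3$ prefactor.
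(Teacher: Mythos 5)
Your proof is correct and follows essentially the same route as the paper's: your ``exposed'' set is exactly the paper's neighborhood $\N_i(L,\lambda)$ (with $L=\delta$ and $\lambda=3/p$), your deterministic counting bound of $6\delta/p$ is Lemma~\ref{lem:neigh_len}, and your Chernoff argument for well-separated pairs with the choice $\delta=\Theta(\tfrac1\b+\tfrac1{\b pr}\log(n/\ep))$ is Lemma~\ref{lem:correct_majority_orders}, followed by the same tie-breaking step. No substantive differences to report.
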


\begin{sproof}
We show that with high probability, for any alternative $i\in[n],$ only $N = O(\frac{1}{p}(\frac{1}{\b}+\frac{1}{\b pr}\log\frac{n}{\ep}))$ other alternatives $j$ are ranked incorrectly relatively to $i$ in the majority of the samples of $\prof$ where both $i$ and $j$ appear. Therefore, by the definition of the {\posest}, even after tie braking, each alternative is ranked by the output ranking within an $O(N)$ margin from its original position in $\central$.

If two alternatives $i,j$ are ranked $L$ positions away by the reduced central ranking corresponding to a sample, then the probability that they appear swapped in the sample is at most $2e^{-\b L/2}$. However, even if $i,j$ are distant in $\central$, they might be ranked close by a reduced central ranking.

For any $i\in[n]$, we define a neighborhood $\N_i(L,\lambda)$ containing the other alternatives $j$ which appear less than $L$ positions away from $i$ in the corresponding reduced central rankings of at least $r/\lambda$ samples. Intuitively, those alternatives $j$ outside $\N_i(L,\lambda)$ are far from $i$ in the corresponding reduced central ranking of many samples. Hence, in these samples where $i,j$ are initially far (according to $L$), the probability of observing them swapped is small enough so that, with high probability, the number of samples where they are ranked correctly is dominant among all the appearances of the pair, since we have additionally forced the number of samples where $i,j$ are initially close (in which swaps are easy) to be small (according to $\lambda$).

Additionally, we bound the size of the neighborhood by $|\N_i(L,\lambda)|\le 2L\lambda$, because in each sample there is only a small number of candidate neighbors (according to $L$) and an element of $\N_i(L,\lambda)$ uses many of the total number places. We conclude the proof by setting $L=O(\frac{1}{\b}+\frac{1}{\b pr}\log\frac{n}{\ep}))$ and $\lambda = O(\frac{1}{p})$. Intuitively, $\lambda$ is chosen so that the number of samples where swaps are difficult is comparable to $pr$ (the minimum number of samples where each pair appears). The margin of error is $N=O(L\lambda)$.
\end{sproof}

For the details, we refer the reader to the Appendix \ref{s:pf3}. We continue with a remark on the sample complexity.
\begin{remark}\label{rem:approx}
In Theorem \ref{thm:approx}, the margin of the approximation accuracy $N$ can be refined as follows:
\[
    N = 
    \begin{cases}
        O(\frac{1}{\b p^2r}\log(n/\ep)),\text{ when }r=O(\frac{1}{p}\log(n/\ep))\,. \\
        
        O(\frac{1}{\b p}),\text{ when }r=\omega(\frac{1}{ p}\log(n/\ep))\,. \\
        0, \text{ when }r=\omega(\frac{1}{\b^2 p}\log(n/\ep))\,.
    \end{cases}
\]
\end{remark}

According to Remark \ref{rem:approx}, the error margin of approximation for the {\posest} provably diminishes inversely proportionally to $\b p^2r$, when $r$ is sufficiently small and eventually becomes zero when $r$ exceeds the sample complexity of Theorem~\ref{thm:upper_central}.


\section{Applications of Approximation}
\label{s:applications}

Assume we are given a sample from $\MadvS$, where $\Scal$ is $p$-frequent for some $p\in(0,1]$. Unless $|\Scal|$ is sufficiently large, we cannot find the central ranking with high probability. However, due to Theorem $\ref{thm:approx}$, we know that the positional estimator approximates the true positions of the alternatives within a small margin. This implies two possibilities which will be analyzed shortly: First, in Section~\ref{sec:mle}, we present an algorithm that finds the maximum likelihood estimation of the central ranking with high probability. The algorithm is quite efficient, assuming that the frequency $p$ and the spread parameter $\b$ are not too small. In Section~\ref{sec:topk}, we show how to retrieve the top-$k$ ranking (assuming we have enough samples to sort $O(k)$ elements), when $k$ is sufficiently large.


\subsection{Maximum Likelihood Estimation of the Central Ranking} \label{sec:mle}

We work in the regime where $r$ is (typically much) smaller than the sample complexity of Theorem~\ref{thm:upper_central}. 

We start with some necessary notation. For any $\A\subse \S_n$, let $\mleA=\mleA[\prof]$ be a \textit{maximal likelihood} estimation of $\central$ among elements of $\A$, that is:
\begin{equation}\label{eqn:mleA}
    \mleA\in\arg\max_{\sample\in\A}\pr[\prof|\sample,\b,\Scal]\,.
\end{equation}
If $\A=\S_n$, $\mleA=\mle$, is a \textit{maximum likelihood} estimation of $\central$, while if $\central\in\A$, $\mleA=\mlenat$ is a \textit{likelier than nature} estimation of $\central$ \citep{rubinstein2017sorting}.

The following lemma states that computing the maximum likelihood ranking (\textsc{MLR}) is equivalent to maximizing the total number of pairwise agreements (\textsc{MPA}) between the selected ranking and the samples.

\begin{lemma} \label{lem:mle_form} 
Let $\A$ be a subset of $\S_n$ and assume that we draw a sample profile $\prof \sim \MadvS.$ Consider the following two problems: 
\begin{equation} \label{eq:mlr}
    \textsc{(MLR)} \hspace{2mm} \arg\max_{\sample\in\A}\pr[\prof|\sample,\b,\Scal]\ \ \ \ \ \mbox{and}
\end{equation}
\begin{equation} \label{eq:mpa}
    \textsc{(MPA)} \hspace{2mm}  \arg\max_{\sample\in\A}\sum_{i\succ_\sample j}|\{\sample'\in\prof: i\succ_{\sample'}j\}|\,.
\end{equation}
Then, the solutions of \textsc{(MLR)} and \textsc{(MPA)} coincide.
\end{lemma}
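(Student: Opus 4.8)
The plan is to expand the log-likelihood and show that, up to additive terms that do not depend on the choice of $\sample \in \A$, it equals the pairwise-agreement objective in \eqref{eq:mpa}. Starting from the definition of the selective Mallows distribution,
\[
    \log \pr[\prof \mid \sample, \b, \Scal]
    = \sum_{\ell \in [r]} \bigl( -\log Z(S_\ell, \b) - \b\, \kt(\sample, \sample_\ell) \bigr).
\]
The crucial observation is that the normalization constants $Z(S_\ell,\b)$ depend only on $|S_\ell|$ and $\b$, hence are constant over the choice of $\sample$; likewise $\b > 0$ is fixed. Therefore maximizing the log-likelihood over $\sample \in \A$ is equivalent to minimizing $\sum_{\ell} \kt(\sample, \sample_\ell)$, where $\kt(\sample,\sample_\ell)$ counts the pairs in $S_\ell$ ordered oppositely by $\sample$ and $\sample_\ell$.

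Next I would rewrite this total Kendall tau cost as a sum over ordered pairs. For each $\ell$, write $\kt(\sample, \sample_\ell) = \sum_{i \succ_\sample j,\ i,j \in S_\ell} \ind\{ j \succ_{\sample_\ell} i \}$, i.e.\ summing over the pairs $\{i,j\} \subseteq S_\ell$ and charging a cost of $1$ exactly when $\sample$ and $\sample_\ell$ disagree. Summing over $\ell$ and swapping the order of summation,
\[
    \sum_{\ell \in [r]} \kt(\sample, \sample_\ell)
    = \sum_{i \succ_\sample j} \bigl| \{ \sample' \in \prof : i,j \in \sample' \text{ and } j \succ_{\sample'} i \} \bigr|,
\]
where the outer sum ranges over ordered pairs $(i,j)$ with $i \succ_\sample j$. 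Now for each unordered pair $\{i,j\}$ with, say, $i \succ_\sample j$, the number of samples containing both $i$ and $j$ is $|\prof_{i,j}|$ (in the notation of Algorithm~\ref{algo:positional}), which does not depend on $\sample$; and this splits as the number of samples where $i \succ_{\sample'} j$ plus the number where $j \succ_{\sample'} i$. Hence the disagreement count equals $|\prof_{i,j}| - |\{\sample' \in \prof : i \succ_{\sample'} j\}|$, and summing over pairs, $\sum_\ell \kt(\sample,\sample_\ell) = C(\prof) - \sum_{i \succ_\sample j} |\{\sample' \in \prof : i \succ_{\sample'} j\}|$, where $C(\prof) = \sum_{\{i,j\}} |\prof_{i,j}|$ is independent of $\sample$.

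Combining the two displays, $\log \pr[\prof \mid \sample, \b, \Scal]$ equals a $\sample$-independent constant plus $\b \sum_{i \succ_\sample j} |\{\sample' \in \prof : i \succ_{\sample'} j\}|$, so the maximizers over $\A$ of \eqref{eq:mlr} and of \eqref{eq:mpa} coincide, which is the claim. This argument is essentially a bookkeeping computation; the only point that requires a little care is the interchange of summation and the observation that each unordered pair is counted consistently (i.e.\ that $|\prof_{i,j}|$ and the constant $C(\prof)$ genuinely do not depend on $\sample$), so I would state that step explicitly rather than wave it through. No serious obstacle is anticipated.
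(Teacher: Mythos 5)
Your proof is correct and follows essentially the same route as the paper: drop the $\sample$-independent normalization constants, reduce maximum likelihood to minimizing $\sum_\ell \kt(\sample,\sample_\ell)$, and then pass from minimizing disagreements to maximizing agreements using the fact that the total number of appearances of each pair is fixed by $\Scal$. You merely spell out the bookkeeping that the paper leaves implicit.
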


\begin{proof}

If $\prof=\samples \sim \MadvS$, then, starting from \eqref{eq:mlr}, we get that:
\[
    \arg\max_{\sample\in\A}\pr[\prof|\sample,\b,\Scal] = \arg\min_{\sample\in\A}\sum_{\ell\in[r]}\kt(\sample,\sample_\ell)\,.
\]
That is, maximizing the likelihood function is equivalent to minimizing the total number of pairwise disagreements. Equivalently, we have to maximize the total number of pairwise agreements and, hence, retrieving \eqref{eq:mpa}. Note that the samples in $\prof$ are incomplete and therefore each pair of alternatives is compared only in some of the samples.
\end{proof}

Let us consider a subset $\A$ of $\S_n$. According to Lemma \ref{lem:mle_form}, there exists a function $f=f[\prof]: [n]\times [n]\to \Nn$ such that solving (\textsc{MLR}) is equivalent to maximizing a score function $s:\A \to \Nn$ of the form:
\begin{equation}\label{eq:score}
    s(\sample) = \sum_{i\succ_\sample j}f(i,j)\,.
\end{equation}
Then, as shown by \citet{braverman2009sorting}, there exists a dynamic programming, which given an initial approximation of the maximizer of $s$, computes a ranking that maximizes $s$ in time linear in $n$, but exponential in the error of the initial approximation. More specifically, \citet{braverman2009sorting} show that:

\begin{lemma}[\citet{braverman2009sorting}] \label{lem:maxscore}
    Consider a function $f:[n]\times [n] \rightarrow \Nn$. Suppose that there exists an optimal ordering $\sample\in\S_n$ that maximizes the score \eqref{eq:score} such that $|\sample(i)-i|\le R, \forall i\in[n]$. Then, there exists an algorithm which computes $\sample$ in time $O(n\cdot R^2\cdot 2^{6R})$.
\end{lemma}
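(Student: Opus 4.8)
The plan is to solve the optimization by a dynamic program that fills the positions of the output ranking from left to right, exploiting the promised displacement bound to keep the state space of size $2^{O(R)}$. Write $\sigma=\sample^{-1}$ for the map that sends a position to the element placed there; since for any permutation the multisets $\{|\sample(i)-i|\}_i$ and $\{|\sigma(j)-j|\}_j$ coincide, the hypothesis $|\sample(i)-i|\le R$ gives $\sigma(j)\in\{j-R,\dots,j+R\}$ for every position $j$ (and, read the other way, element $e$ occupies one of the positions $e-R,\dots,e+R$).

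The first step is to localize the objective. In a complete ranking each unordered pair $\{a,b\}$ with $a<b$ contributes $f(a,b)$ if $a$ precedes $b$ and $f(b,a)$ otherwise, so
\[
    s(\sample)=\sum_{a<b}f(a,b)+\sum_{\substack{a<b,\\ b\text{ before }a\text{ in }\sample}}\bigl(f(b,a)-f(a,b)\bigr),
\]
and since the first sum does not depend on $\sample$, maximizing $s$ amounts to choosing the set of ``inversions'' (pairs $a<b$ output in reverse order) so as to maximize the total of $f(b,a)-f(a,b)$ over them. The key point is that an inversion of $(a,b)$ with $a<b$ forces $b-R\le\sigma^{-1}(b)<\sigma^{-1}(a)\le a+R$, hence $b-a<2R$: only pairs of elements within distance $2R$ can ever be inverted, every other pair is output in identity order, and the algorithm only needs the $O(nR)$ values $f(a,b),f(b,a)$ with $|a-b|<2R$.

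Next I would set up the DP. After position $j$ has been decided, let $U_j$ be the set of already-placed elements; since every $e\le j-R$ must have been placed and every placed element is $\le j+R$, we have $U_j=\{1,\dots,j-R\}\sqcup V_j$ with $V_j\subseteq\{j-R+1,\dots,j+R\}$ and $|V_j|=R$. The DP state is the pair $(j,V_j)$, of which there are at most $n\binom{2R}{R}\le n\,2^{2R}$, and $g[j,V_j]$ stores the largest attainable total inversion value over positions $1,\dots,j$ consistent with that state. A transition chooses the element $e$ for position $j+1$: it must lie in $\{j+1-R,\dots,j+1+R\}\setminus U_j$, and it is forced to equal $j+1-R$ whenever $j+1-R\notin V_j$ (that element cannot be placed any later). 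Placing $e$ creates exactly the inversions $\{e,b\}$ with $e<b$ and $b\in U_j$, together with the inversions $\{a,e\}$ with $a<e$ and $a\notin U_j$; in both cases the partner lies within distance $2R$ of $e$, hence inside the window recorded by $V_j$, so the incremental value can be read off in $O(R)$ time. Backtracking through $g$ recovers $\sigma$, and we output $\sample=\sigma^{-1}$; since the DP ranges over precisely the orderings of displacement at most $R$ and, by hypothesis, one of them is a global maximizer of $s$, the returned ordering is a global maximum with displacement $\le R$.

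For the running time, there are $O(n\,2^{2R})$ states, each with at most $2R+1$ transitions costing $O(R)$ each, plus $O(nR)$ to read the relevant entries of $f$; tightening the window bookkeeping and handling the boundary positions near $1$ and $n$ yields the claimed $O(n\,R^2\,2^{6R})$. I expect the main obstacle to be exactly the localization argument above: to keep the dependence on $n$ linear one must show that the incremental effect of each placement is determined by an $O(R)$-sized window of already/not-yet-placed elements, and is therefore evaluable in $O(R)$ rather than $O(n)$ time. The rest — verifying the invariant $U_j=\{1,\dots,j-R\}\sqcup V_j$, spotting the forced moves, and dealing with the boundary cases — is routine bookkeeping.
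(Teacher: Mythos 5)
Your proof is correct and follows essentially the same route as the cited Braverman--Mossel argument: a left-to-right dynamic program whose state is the set of already-placed elements, compressed to an $R$-subset of a $2R$-window via the displacement bound, with the objective localized so that only pairs at distance $<2R$ (the only ones that can be inverted) need to be tracked incrementally. The accounting is sound and in fact yields $O(n R^2 2^{2R})$, which is within the claimed $O(n R^2 2^{6R})$.
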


Recall that the positional estimator finds such an approximation $\positional$ of the central ranking. Also, a careful examination of the proof of Lemma \ref{lem:maxscore} shows that given any initial permutation $\sigma \in \S_n$, the dynamic programming algorithm finds, in time $O(n\cdot R^2\cdot 2^{6R})$, a maximizer (of the score function $s$) in $\A\subse \S_n$, where $\A$ contains all the permutations that are $R$-pointwise close\footnote{We say that $\pi,\sigma \in \S_n$ are $R$-pointwise close, if it holds that: $|\pi(i) - \sigma(i)| \leq R$ for all $i \in [n]$.} to the initial permutation $\sigma$. Therefore, we immediately get an algorithm that computes a likelier than nature estimation $\mlenat$ by finding $\mleA$, for $\A$ such that $\positional,\central\in\A$.

Furthermore, if $\mle$ is an approximation of $\central$, then $\positional$ is an approximation of $\mle$. Hence, we get an algorithm for computing a maximum likelihood estimation $\mle$. It turns out that $\mle$ approximates $\central$, but with a larger margin of error. Thus, we obtain the following:

\begin{theorem}\label{thm:mle_alg}
Let $\prof$ be a sample profile from $\MadvS$, where $\Scal$ is a $p$-frequent selection sequence, $p\in(0,1]$, $|\Scal|=r$, $\central\in\S_n$, $\b>0$ and let $\a>0$. Then:
\begin{enumerate}
    \item There exists an algorithm that finds a likelier than nature estimation of $\central$ with input $\prof$ with probability at least $1-n^{-\a}$ and in time:
    \[
        T = O(n^2+n^{1+O(\frac{2+\a}{r\b p^2})}2^{O(\frac{1}{p\b})}\log^2n)\,.
    \]
    \item There exists an algorithm that finds a maximum likelihood estimation of $\central$ with input $\prof$ with probability at least $1-n^{-\a}$ and in time:
    \[
        T = O(n^2+n^{1+O(\frac{2+\a}{r\b p^4})}2^{O(\frac{1}{p^3\b})}\log^2n)\,.
    \]
\end{enumerate}
\end{theorem}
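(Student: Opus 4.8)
The plan is to combine three ingredients already available in the excerpt: (i) the reduction of maximum likelihood to pairwise-agreement maximization, Lemma~\ref{lem:mle_form}, which recasts the problem as maximizing a score function $s(\sample)=\sum_{i\succ_\sample j}f(i,j)$ of the form \eqref{eq:score}; (ii) the dynamic program of \citet{braverman2009sorting}, Lemma~\ref{lem:maxscore}, in the strengthened form noted right after it --- given \emph{any} seed permutation $\sigma$, in time $O(n\cdot R^2\cdot 2^{6R})$ it returns a score maximizer over the set $\A$ of all permutations that are $R$-pointwise close to $\sigma$; and (iii) the approximation guarantee of the positional estimator, Theorem~\ref{thm:approx} (refined via Remark~\ref{rem:approx}), which says that with probability $\ge 1-\ep$ every alternative $i$ satisfies $|\positional(i)-\central(i)|\le N$ for $N=O(\frac{\b^2+1}{\b^3 p^2 r}\log(n/\ep))$.

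\textbf{Part 1 (likelier than nature).} First I would compute $\positional=\positional[\prof]$; a naive implementation of Algorithm~\ref{algo:positional}, reading each of the $r$ samples, costs $O(n^2)$ pairwise counts plus processing the profile, accounting for the additive $O(n^2)$ and the $\log^2 n$ bookkeeping factor. Set $\ep=n^{-\a}$ in Theorem~\ref{thm:approx}; then with probability $\ge 1-n^{-\a}$ we have $|\positional(i)-\central(i)|\le R_1$ for every $i$, where $R_1=O\!\big(\frac{\b^2+1}{\b^3 p^2 r}\log(n^{1+\a})\big)=O\!\big(\frac{2+\a}{r\b p^2}\big)+O\!\big(\frac{1}{p\b}\big)$ after separating the two terms of Remark~\ref{rem:approx} (the $\frac{1}{\b pr}\log(n/\ep)$ part and the $\frac{1}{\b p}$ part). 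Feed $\sigma=\positional$ as the seed into the Braverman--Mossel dynamic program with radius $R_1$. On the good event, $\central$ is $R_1$-pointwise close to $\positional$, hence $\central\in\A$, so the returned ranking $\mleA$ satisfies $s(\mleA)\ge s(\central)$, i.e. it is likelier than nature by Lemma~\ref{lem:mle_form}. The running time is $O(n\cdot R_1^2\cdot 2^{6R_1})$; writing $2^{6R_1}=2^{O(\frac{2+\a}{r\b p^2})}\cdot 2^{O(\frac{1}{p\b})}=n^{O(\frac{2+\a}{r\b p^2})}\cdot 2^{O(\frac{1}{p\b})}$ and absorbing the polynomial $R_1^2$ into the $\log^2 n$ (or into the exponent), this gives exactly the stated bound $T=O(n^2+n^{1+O(\frac{2+\a}{r\b p^2})}2^{O(\frac{1}{p\b})}\log^2 n)$.

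\textbf{Part 2 (maximum likelihood).} Here the seed $\positional$ is not close to the \emph{true} maximizer $\mle$ directly, so I would chain the approximations. Apply the dynamic program of Part~1 to get $\mlenat$, a likelier-than-nature ranking; since $\mlenat$ is at least as likely as $\central$, it is itself a valid central ranking of a selective Mallows instance with the same $\b,\Scal$ in the sense needed, so Theorem~\ref{thm:approx} applied with center $\mlenat$ (equivalently: $|\positional(i)-\mlenat(i)|\le |\positional(i)-\central(i)|+|\central(i)-\mlenat(i)|$, and both $\central$ and $\mlenat$ are within $N$ of $\positional$ only on their respective good events) yields that $\mle$ lies within $R_2=O(R_1)$-pointwise distance of... no --- the honest route is: by Theorem~\ref{thm:approx} the true MLE $\mle$ (which maximizes $s$) satisfies $|\positional(i)-\mle(i)|\le$ (distance from $\positional$ to $\central$) $+$ (distance from $\central$ to $\mle$), and the second distance is itself bounded because $\mle$ is likelier than $\central$ hence obeys the same positional concentration as blown up by one more application, giving $R_2=O(\frac{2+\a}{r\b p^4})+O(\frac{1}{p^3\b})$ (the extra $1/p^2$ comes from iterating the bound of Theorem~\ref{thm:approx} through a ranking that is only known to be likelier than nature rather than drawn from the model). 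Then run the Braverman--Mossel dynamic program with seed $\positional$ and radius $R_2$: on the good event $\mle$ is $R_2$-pointwise close to $\positional$, so the output is an exact maximizer of $s$ over all of $\S_n$, i.e. a maximum likelihood ranking. The time is $O(n\cdot R_2^2\cdot 2^{6R_2})=O(n^{1+O(\frac{2+\a}{r\b p^4})}2^{O(\frac{1}{p^3\b})}\log^2 n)$, plus the $O(n^2)$ for computing $\positional$, matching the claim.

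\textbf{Main obstacle.} The routine parts (invoking Lemma~\ref{lem:maxscore} with an arbitrary seed, computing $\positional$, setting $\ep=n^{-\a}$) are straightforward. The delicate step is Part~2: justifying why a ranking that is merely \emph{likelier than nature} --- rather than a genuine sample's center --- still enjoys a positional-concentration bound strong enough to sandwich $\mle$ near $\positional$, and tracking how the $p$-dependence degrades from $p^2$ to $p^4$ (and $\frac{1}{p\b}$ to $\frac{1}{p^3\b}$) under this iteration. This requires re-deriving the neighborhood-size argument of Theorem~\ref{thm:approx} relative to $\mle$'s induced pairwise statistics, using that $\mle$ agrees with the majority on every pair (being the MPA optimizer), and carefully composing the two error margins without losing more than a constant number of applications of the bound. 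I would isolate this as a separate lemma ("any likelier-than-nature ranking is $O(N)$-pointwise close to $\central$, hence $\mle$ is $O(N)$-pointwise close to $\positional$") before assembling the running-time statement.
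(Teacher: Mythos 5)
Your Part~1 is correct and matches the paper: compute $\positional$, set $\ep=n^{-\a}$ in Theorem~\ref{thm:approx}, and run the Braverman--Mossel dynamic program seeded at $\positional$ with radius $N$; on the good event $\central$ lies in the search window, so the output is likelier than nature.

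Part~2, however, has a genuine gap, and you have correctly diagnosed where it is: everything hinges on the claim that $\mle$ is pointwise within $K=O(N/p^2)$ of $\central$, and you assert this without proving it. Worse, the mechanism you gesture at --- that $\mle$ (or $\mlenat$) ``obeys the same positional concentration as blown up by one more application'' of Theorem~\ref{thm:approx} --- is not viable: Theorem~\ref{thm:approx} is a statement about the positional estimator of samples drawn from $\Malin[\central,\b]^{\Scal}$, and a ranking that merely has higher likelihood than $\central$ is not the center of any model that generated $\prof$, so there is nothing to ``re-apply'' the theorem to. The paper proves the claim by an entirely different, deterministic exchange argument, conditioned on the high-probability events of Lemmata~\ref{lem:neigh_len} and~\ref{lem:correct_majority_orders}: suppose some $i$ has $|\mle(i)-\central(i)|=K\ge hN$; partition the elements placed by $\mle$ into the interval $[i,i+K)$ into those preceding $i$ in $\central$ ($\St_1$, of size $T$), those in $\N_i(L,\lambda)$ ($\St_2$, of size at most $N$), and the remaining far-away elements ($\St_3$); score-optimality of $\mle$ forces $T=\Omega\bigl(\frac{(1-2c)p(h-1)-1}{1+(1-2c)p}N\bigr)$, and then the permutation $\sigma_0$ obtained by concatenating $\mle|_{\{1,\dots,i-1\}}$ with $\mle|_{\{i,\dots,n\}}$ is shown to satisfy $s(\sigma_0)>s(\mle)$ once $h=2+8/p+8/p^2$, a contradiction. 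This is exactly where the degradation from $p^2$ to $p^4$ (and $\frac{1}{p\b}$ to $\frac{1}{p^3\b}$) comes from, and it cannot be recovered by a triangle inequality between good events as you propose. Your final paragraph honestly isolates this as the missing lemma, but the proposal as written does not supply it, and the route you sketch for it would need to be replaced by the counting argument above.
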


To summarize the algorithm of Theorem~\ref{thm:mle_alg}, we note that it consists of two main parts. First, using the fact that our samples are drawn from a selective Mallows distribution, in which the positions of the alternatives exhibit some notion of locality, we approximate the central ranking within some error margin for the positions of alternatives. Second, beginning from the approximation we obtained at the previous step, we explore (using dynamic programming instead of exhaustive search, see Lemma \ref{lem:maxscore}) a subset of $\S_n$ which is reasonably small and provably contains with high probability either $\central$ (for finding a likelier than nature ranking) or $\mle$ (for finding a maximum likelihood one).

The proof of Theorem \ref{thm:mle_alg} can be found at Appendix \ref{s:pf4}.


\subsection{Retrieving the Top-$k$ Ranking} \label{sec:topk}

In this section, we deal with the problem of retrieving the top-$k$ sequence and their ranking in $\pi_0$. In this problem, we are given a sample profile from a selective Mallows model and a positive integer $k$. We aim to compute the (identities and the) order of the top-$k$ sequence in the central ranking $\pi_0$.

Based on the properties of the positional estimator, it suffices to show that (after sufficiently many selective samples) any of the alternatives of the top-$k$ sequence $i$ in $\pi_0$ is ranked correctly with respect to any other alternative $j$ in the majority of samples where both $i$ and $j$ appear. Then, every other alternative will be ranked after the top-$k$ by the {\posest}. 

The claim above follows from the approximation property of the {\posest}. Theorem~\ref{thm:approx} ensures that for any alternative of the top-$k$ sequence $i$, the correct majority order against most other alternatives (those that are far from $i$ in most reduced central rankings). So, we can focus on only $O(k)$ pairs, which could appear swapped with noticeable probability. 

To formalize the intuition above, we can restrict our attention to the regime where the number of alternatives $n$ is sufficiently large and $k = \omega(1 / (p \b))$. By Remark~\ref{rem:approx}, this ensures that the accuracy of the approximation of {\posest}  diminishes inversely proportional to $\b p^2r$, since we only aim to ensure that the accuracy is $O(k)$. Specifically, Theorem~\ref{thm:topk-up} provides an upper bound on the estimation in that regime and Corollary~\ref{cor:topk-low} gives a general lower bound for the case where $k = O(n)$.

\begin{theorem}
\label{thm:topk-up}
Let $k = \omega( 1/(p \b))$ be a positive integer. For any $\ep\in(0,1)$ and $p\in(0,1]$, there exists an algorithm which given a sample profile from $\MadvS$, where $\Scal$ is a $p$-frequent selection sequence with:
\begin{equation*}
    |\Scal| = O\!\left(\frac{\log(k/\ep)}{p(1-e^{-\b})^2}+\frac{\log(n/\ep)}{p^2\b k}\right)\,,
\end{equation*}
retrieves the top-$k$ ranking of the alternatives of $\central$, with probability at least $1-\ep$.
\end{theorem}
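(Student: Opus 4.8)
The plan is to show that, with probability at least $1-\ep$, every alternative $i$ in the top-$k$ sequence of $\central$ beats every other alternative $j$ in the pairwise majority over the samples in which both appear; given this, the positional estimator ranks all top-$k$ elements before all others, and separately ranks them correctly among themselves, so $\positional$ recovers the top-$k$ ranking exactly. I would split the pairs $(i,j)$ with $\central(i)\le k$ into two classes: the \emph{close} pairs, where $\central(j)$ is within some margin $M=O(k)$ of $\central(i)$ (there are only $O(k)$ such $j$ for each $i$, hence $O(k^2)$ such pairs overall), and the \emph{far} pairs, where $\central(j)>k+M$. For the far pairs I would invoke Theorem~\ref{thm:approx} (in the refined form of Remark~\ref{rem:approx}): choosing $r$ so that the approximation margin $N$ of the positional estimator is $O(k)$ — which, since $k=\omega(1/(p\b))$, requires $N=O(1/(\b p^2 r)\log(n/\ep))\le c k$, i.e. $r=\Omega\!\big(\frac{\log(n/\ep)}{p^2\b k}\big)$ — ensures that with probability $1-\ep/2$ no alternative $j$ outside an $O(k)$-neighborhood (in the reduced central rankings) of $i$ is ranked above $i$ in the majority of their joint samples. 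This is exactly the second term in the claimed sample bound and handles all far pairs at once.

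For the close pairs I would mimic the argument of Theorem~\ref{thm:upper_central}: each of these $O(k^2)$ pairs appears in at least $pr$ samples, each of its appearances is swapped with probability at most $1/(1+e^{\b})\le (1-(1-e^{-\b}))/2$ away from a coin biased toward the correct order, so by a Hoeffding bound the pair is decided wrongly by the majority with probability at most $\exp(-\Omega((1-e^{-\b})^2 pr))$. Taking a union bound over the $O(k^2)$ close pairs and asking this to be at most $\ep/2$ gives $pr=\Omega\!\big(\frac{\log(k/\ep)}{(1-e^{-\b})^2}\big)$, i.e. $r=\Omega\!\big(\frac{\log(k/\ep)}{p(1-e^{-\b})^2}\big)$, which is the first term. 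Combining the two events by a final union bound, with probability at least $1-\ep$ every close pair and every far pair involving a top-$k$ element is majority-decided correctly, so after the (deterministic-given-this-event) behaviour of {\posest} — including the tie-breaking step, which is irrelevant here because $\positional(i)$ then equals $\central(i)-1$ exactly for the top-$k$ elements — the output ranking agrees with $\central$ on the first $k$ positions.

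The main obstacle is getting the bookkeeping for the far pairs to line up with the approximation guarantee rather than with a naive per-pair union bound. A direct Hoeffding argument over \emph{all} $\Theta(nk)$ pairs involving a top-$k$ element would force $\log n$ inside the first term and destroy the bound for large $k$; the whole point is that Theorem~\ref{thm:approx} already controls these pairs \emph{collectively} through the neighborhood-size bound $|\N_i(L,\lambda)|\le 2L\lambda$, so I only need to (i) check that "ranked above $i$ in the majority of joint samples" is implied by "$j\notin\N_i(L,\lambda)$ and the favorable concentration event of Theorem~\ref{thm:approx}", and (ii) verify that the margin $M=O(k)$ can be taken large enough that everything outside the top $k+M$ is genuinely far in the sense of $\N_i$. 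Point (ii) is where $k=\omega(1/(p\b))$ is used: it guarantees the target accuracy $O(k)$ dominates the irreducible $O(1/(\b p))$ term in Remark~\ref{rem:approx}, so the "$N=0$" regime is not needed and the cheaper middle regime suffices. Once these two points are in place, the two sample-complexity terms decouple cleanly and the theorem follows.
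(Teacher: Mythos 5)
Your overall architecture --- handle the $O(k^2)$ ``nearby'' pairs by Hoeffding plus a union bound (yielding the first term of the sample bound) and all remaining pairs collectively through the approximation machinery of Theorem~\ref{thm:approx} (yielding the second term), then conclude via the positional estimator --- is exactly the paper's proof. The sample-complexity bookkeeping, the role of the hypothesis $k=\omega(1/(p\b))$ in absorbing the irreducible $O(1/(p\b))$ part of the neighborhood size, and the observation that tie-breaking is harmless on the good event are all correct.

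There is, however, one step in your plan that would fail as stated: your verification task (ii), that every $j$ with $\central(j)>k+M$ is ``genuinely far in the sense of $\N_i$.'' This is false in the selective model --- it is precisely the phenomenon the paper illustrates in Section~\ref{s:approx} with $S_\ell=\{1,m\}$: an alternative $j$ arbitrarily far from $i$ in $\central$ lands adjacent to $i$ in the reduced central ranking $\central|_{S_\ell}$ whenever $S_\ell$ omits the elements between them, and if this happens in more than $r/\lambda$ of the selection sets then $j\in\N_i(L,\lambda)$, its per-sample swap probability is a constant, and the exponentially-small-swap analysis you reserve for far pairs does not apply to it. The repair is to abandon the split by position in $\central$ altogether and define the two classes directly by membership in the neighborhood, as the paper does: $A_1(i)=\N_i(L,\lambda)$ gets the Hoeffding-plus-union-bound treatment and $A_2(i)=[n]\setminus(A_1(i)\cup\{i\})$ gets Lemma~\ref{lem:correct_majority_orders}. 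Nothing else in your argument changes, because the only property of the ``close'' class you actually use is its cardinality, and Lemma~\ref{lem:neigh_len} gives $|\N_i(L,\lambda)|\le 2L\lambda=O(k)$ under your choice $r=\Omega\!\left(\frac{\log(n/\ep)}{p^2\b k}\right)$ together with $k=\omega(1/(p\b))$; so the union bound is still over $O(k^2)$ pairs and both terms of the sample bound come out exactly as you computed.
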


\begin{sproof}
Let $\prof\sim\MadvS$ be our sample profile. We will make use of the {\posest}  $\positional = \positional[\prof]$ and, without loss of generality, assume that $\central$ is the identity permutation. We will bound the probability that there exists some $i\in[k]$ such that $\positional(i)\neq i$.

For any $i\in[k]$, we can partition the remaining alternatives into $A_1(i) = \N_i(L,\lambda)$ and $A_2(i)=[n]\setminus (A_1(i)\cup\{i\})$. From the proof sketch of Theorem~\ref{thm:approx}, we recall that $\N_i(L,\lambda)$ contains the alternatives that are ranked no more than $L$ places away from $i$ in the reduced central rankings corresponding to at least $r/\lambda$ samples.

From an intermediate result occurring during the proof of Theorem \ref{thm:approx}, it holds that for some $L$, $\lambda$ such that $|A_1(i)|=O(\frac{1}{p\b}+\frac{1}{p^2\b |\Scal|}\log(n/\ep))$, with probability at least $1-\ep/2$, for every $i\in[n]$, for every alternative $j\in A_2(i)$ (distant from $i$ in most samples), $j$ is ranked in the correct order relatively to $i$ in the majority of the samples where they both appear.

Picking $L$, $\lambda$ so that the above result holds, there exists some $r_1=O(\frac{1}{p^2\b k}\log(n/\ep))$ such that, if $|\Scal| \geq r_1$, then $|A_1(i)| = O(k)$. 

Furthermore, following the same technique used to prove Theorem \ref{thm:upper_central}, we get that for some $r_2 = O(\frac{1}{p(1-e^{-b})^2}\log(k/\ep))$, if $|\Scal|\ge r_2$, then, with probability at least $1-\ep/2$, every pair of alternatives $(i,j)$ such that $i\in[k]$ and $j\in A_1(i)$ is ranked correctly by the majority of samples where both $i$ and $j$ appear, since the total number of such pairs is $O(k^2)$.

Therefore, with probability at least $1-\ep$, both events hold and for any fixed $i\in[k]$, $\positional(i)=i$, because $i$ is ranked correctly relatively to every other alternative in the majority of their pairwise appearances and also because for every other alternative $j>k$: $\positional(j)>k$, since each of the alternatives in $[k]$ appear before it in the majority of samples where they both appear.
\end{sproof}

For the complete proof, we refer to Appendix \ref{s:pf5}.

From a macroscopic and simplistic perspective, the sample complexity of learning the top-$k$ ranking can be interpreted as follows. The first term, i.e., $O_\b(\frac{1}{p}\log(k/\ep))$, accounts for learning the ranking of the top-$k$ sequence (as well as some $O(k)$ other alternatives), since each of them is close to the others in the central ranking (and in each reduced rankings where they appear). Hence, it is probable that their pairs appear swapped. The second term, i.e., $O_\b(\frac{1}{p^2 k}\log(n/\ep))$, accounts for identifying the top-$k$ sequence, by approximating their positions. Intuitively, the required accuracy of the approximation diminishes to the value of $k$, since we aim to keep $O(k)$ probable swaps for each of the alternatives of the top-$k$ sequence. Combining the two parts, we conclude that given enough samples, {\posest} outputs a ranking where the top-$k$ ranking coincides with the top-$k$ ranking of $\pi_0$.

We conclude with the lower bound, followed by a discussion about the tightness of our results.

\begin{cor} \label{cor:topk-low}
For any $k\le n$, $p\in(0,1]$, $\ep\in(0,1/2]$, $\b>0$ and $r=o(\frac{1}{\b p}\log(k/\ep))$, there exists a $p$-frequent selection sequence $\Scal$ with $|\Scal|=r$, such that for any central ranking estimator, there exists a $\central\in\S_n$ such that the estimator, given a sample profile from $\MadvS$, fails to retrieve the top-$k$ ranking of $\central$ with probability at least $\ep$.
\end{cor}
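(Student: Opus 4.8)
The plan is to derive Corollary~\ref{cor:topk-low} from Theorem~\ref{thm:lower_central} by a restriction argument that embeds a hard central-ranking instance on $k$ alternatives into the top-$k$ positions of an $n$-alternative instance. Fix $K=[k]$ and let $\Scal_0$ be the $p$-frequent selection sequence of length $r$ over $K$ that the proof of Theorem~\ref{thm:lower_central} produces when instantiated with $k$ in place of $n$; there the $pr$ ``full'' sets are copies of $K$ and the remaining $(1-p)r$ sets are small subsets of $K$ (one may take them to be singletons). Lift $\Scal_0$ to a sequence $\Scal$ over $[n]$ by replacing every copy of $K$ with the full set $[n]$ and leaving the small sets untouched. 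Then $\Scal$ has length $r$, is $p$-frequent over $[n]$ (every pair of $[n]$ lies in all $pr$ copies of $[n]$, hence in at least a $p$ fraction of $\Scal$), and the restriction of $\Scal$ to $K$ is exactly $\Scal_0$.

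For a worst-case $\tau\in\S_k$ supplied by Theorem~\ref{thm:lower_central}, let $\central\in\S_n$ restrict to $\tau$ on $K$ and rank all of $[n]\setminus K$ below $K$ in a fixed order, so that the top-$k$ ranking of $\central$ is precisely $\tau$. Two structural facts drive the reduction. First, the restriction of a Mallows sample to a subset is again a Mallows sample with the same $\b$ on the restricted central ranking, so a profile $\prof\sim\MadvS$ restricted to $K$ is distributed as $\M_{\tau,\b}^{\Scal_0}$. Second, since $\central$ ranks all of $K$ above all of $[n]\setminus K$, every full sample $\sigma$ satisfies $\kt(\central,\sigma)=\kt(\tau,\sigma|_K)+C(\sigma)$ with $C(\sigma)$ not depending on $\tau$ (the within-bottom discordances are fixed, and each cross discordance is determined by $\sigma$ alone); hence $\prof|_K$ is a sufficient statistic for $\tau$, and the remainder of $\prof$ can be regenerated from $\prof|_K$ without knowing $\tau$. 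Consequently, any estimator of the top-$k$ ranking that reads the whole profile is matched, in failure probability, by an estimator of $\tau$ that reads only $\prof|_K\sim\M_{\tau,\b}^{\Scal_0}$, and Theorem~\ref{thm:lower_central} (applicable because $r=o(\frac{1}{\b p}\log(k/\ep))$) then furnishes a $\tau$ — hence a $\central$ — on which this estimator returns the wrong $\tau$, and so the wrong top-$k$ ranking, with probability exceeding $\ep$. Minor parity adjustments handle odd $k$, and $k=1$ reduces to telling the two candidates for the top position apart.

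Equivalently, one can replay the proof of Theorem~\ref{thm:lower_central} with the ground set shrunk to $K$: the matching family, the placement of $\Theta(k)$ rarely observed disjoint pairs as consecutive (hence maximally swappable) pairs of $\central$, and the binomial anti-concentration estimate all take place inside $K$, the only change being that the averaging over alternatives runs over $k$ rather than $n$, turning $\log(n/\ep)$ into $\log(k/\ep)$; one checks that the top-$k$ ranking pins down the orientation of each of those pairs. The step I expect to need the most care is the sufficiency claim: the full-set samples carry a great deal of information about $\central$ as a whole, and one must verify that none of it bears on $\tau$ beyond what $\prof|_K$ already encodes — this is exactly where it matters that $\central$ separates $K$ from its complement, so that cross-pair discordances are $\tau$-free. (In the replay formulation the analogous essential point is that enforcing $p$-frequency on the whole instance forces each of the $\Theta(k)$ hard top pairs to be observed $pr=o(\frac{1}{\b}\log(k/\ep))$ times, which is still too few to determine its orientation.)
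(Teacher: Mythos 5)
Your proposal is correct, and it reaches the corollary by the route the paper intends: reuse the construction behind Theorem~\ref{thm:lower_central} with the hard, rarely observed adjacent pairs placed among the top-$k$ positions, so that the averaging runs over $k/2$ matchings of $[k]$ and produces the $\log(k/\ep)$ dependence. The paper itself offers no proof beyond calling the corollary an ``immediate consequence'' of Theorem~\ref{thm:lower_central}; you rightly notice that it is not immediate from the \emph{statement} of that theorem (an estimator could recover the top-$k$ ranking while erring only on pairs deep in the ranking), and both of your formulations repair this. Your primary formulation adds something the paper does not make explicit: the factorization $\kt(\central,\sigma)=\kt(\tau,\sigma|_K)+C(\sigma)$ when $K$ is a prefix of $\central$, which simultaneously shows that $\prof|_K\sim\M_{\tau,\b}^{\Scal_0}$ and that $\prof|_K$ is sufficient for $\tau$, turning the corollary into a clean black-box reduction to the $k$-alternative instance of Theorem~\ref{thm:lower_central}. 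The ``replay'' formulation is essentially the argument in Appendix~\ref{s:pf2} verbatim with $n$ replaced by $k$ and the family $\F$ supported on the top-$k$ positions; either version suffices, and I see no gap in the parameter bookkeeping (with singleton filler sets the pair counts only shrink, and $p$-frequency of the lifted sequence is preserved by the $pr$ copies of $[n]$).
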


Corollary~\ref{cor:topk-low} is an immediate consequence of Theorem~\ref{thm:lower_central}. The bounds we provided in Theorem~\ref{thm:topk-up}~and Corollary~\ref{cor:topk-low} become essentially tight if $k=\Omega(\frac{1}{p}\log(n/\ep))$, since the term $O_\b(\frac{1}{p}\log(k/\ep))$ becomes dominant in the upper bound, which then essentially coincides with the lower bound. In the intuitive interpretation we provided for the two terms of the sample complexity in Theorem~\ref{thm:topk-up}, this observation suggests that when $k$ is sufficiently large, the sample complexity of identifying the top-$k$ ranking under the selective Mallows model is dominated by the sample complexity of sorting them.

We conclude with an informative example, where we compare the sample complexity of retrieving the complete central ranking and the top-$k$ ranking in an interesting special case. Let us assume that $p=1/\log\log n$ and that $k=\Theta_\b(\log(n/\ep))$. Then we only need $O_{\b}(\log\log n \cdot \log\frac{\log n}{\ep})$ samples to retrieve the top-$k$ ranking, while learning the complete central ranking requires $\Omega_{\b}(\log (n/\ep))$ samples. Namely, we have an almost exponential improvement in the sample complexity, for values of $k$ that suffice for most practical applications.

\section{Experiments}
\label{s:experiments}

In this section, we present some experimental evaluation of our main results, using synthetic data. First, we empirically verify that the sample complexity of learning the central ranking from $p$-frequent selective Mallows samples using $\posest$ is $\Theta(1/p)$, assuming every other parameter to be fixed. Furthermore, we illustrate empirically that $\posest$ is a smooth estimator of the central ranking, in the sense that $\posest$ outputs rankings that are, on average, closer in Kendall Tau distance to the central ranking as the size of the sample profile grows.

\subsection{Empirical sample complexity}

We estimate the sample complexity of retrieving the central ranking from selective Mallows samples where $n=20$ and $\beta=2$, with probability at least $0.95$, using $\posest$ by performing binary search over the size of the sample profile. During a binary search, for every value, say $r$, of the sample profile size we examine, we estimate the probability that $\posest$ outputs the central ranking by drawing $100$ independent $p$-frequent selective Mallows profiles of size $r$, computing $\posest$ for each one of them and counting successes. We then compare the empirical success rate to $0.95$ and proceed with our binary search accordingly. For a specific value of $p$, we estimate the corresponding sample complexity, by performing $100$ independent binary searches and computing the average value. The results, which are shown in Figure \ref{fig:SCadv}, indicate that the dependence of sample complexity on the frequency parameter $p$ is indeed $\Theta(1/p)$.
\begin{figure}[ht]
    \centering
    \includegraphics[width=.85\linewidth]{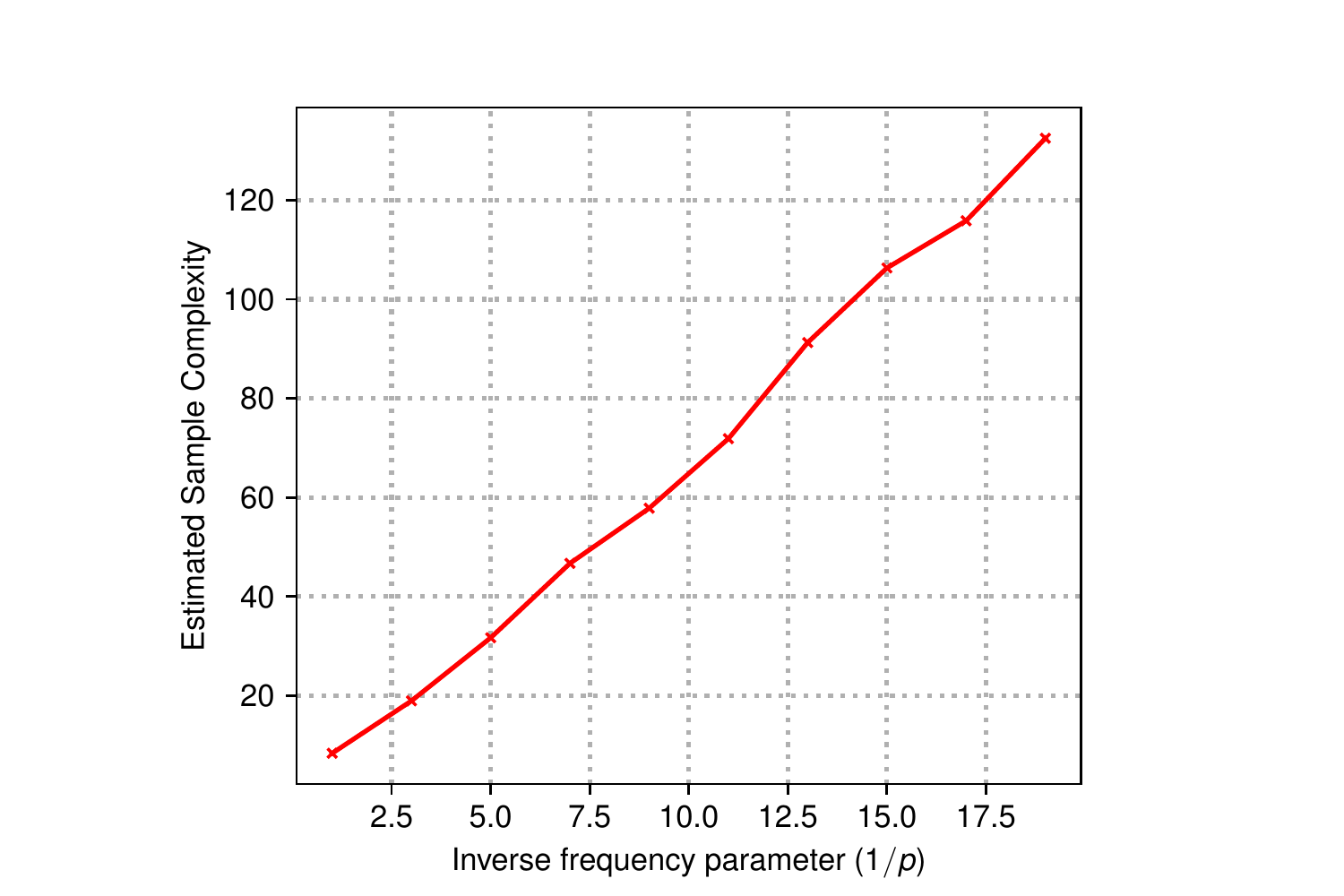}
    \caption{Estimated sample complexity of retrieving, with probability at least $0.95$ and using $\posest$, the central ranking from selective Mallows samples, with $n=20$, $\beta=2$, over the frequency parameter's inverse.}\label{fig:SCadv}
\end{figure}

\subsection{Smoothness of $\posest$}
We plot, for different values of the frequency parameter $p$, the average Kendall Tau distance between the central ranking and the output of $\posest$ with respect to the size of the sample profile. For each value, say $r$, of the sample profile size, considering $\beta=0.3$ and $n=20$, we draw $100$ independent selective Mallows sample profiles, each of size $r$, we compute the distance between the output of $\posest$ for each sample profile and the central ranking and take the average of these distances. The results are presented in Figure \ref{fig:PDadv}.

\begin{figure}[ht]
    \centering
    \includegraphics[width=.95\linewidth]{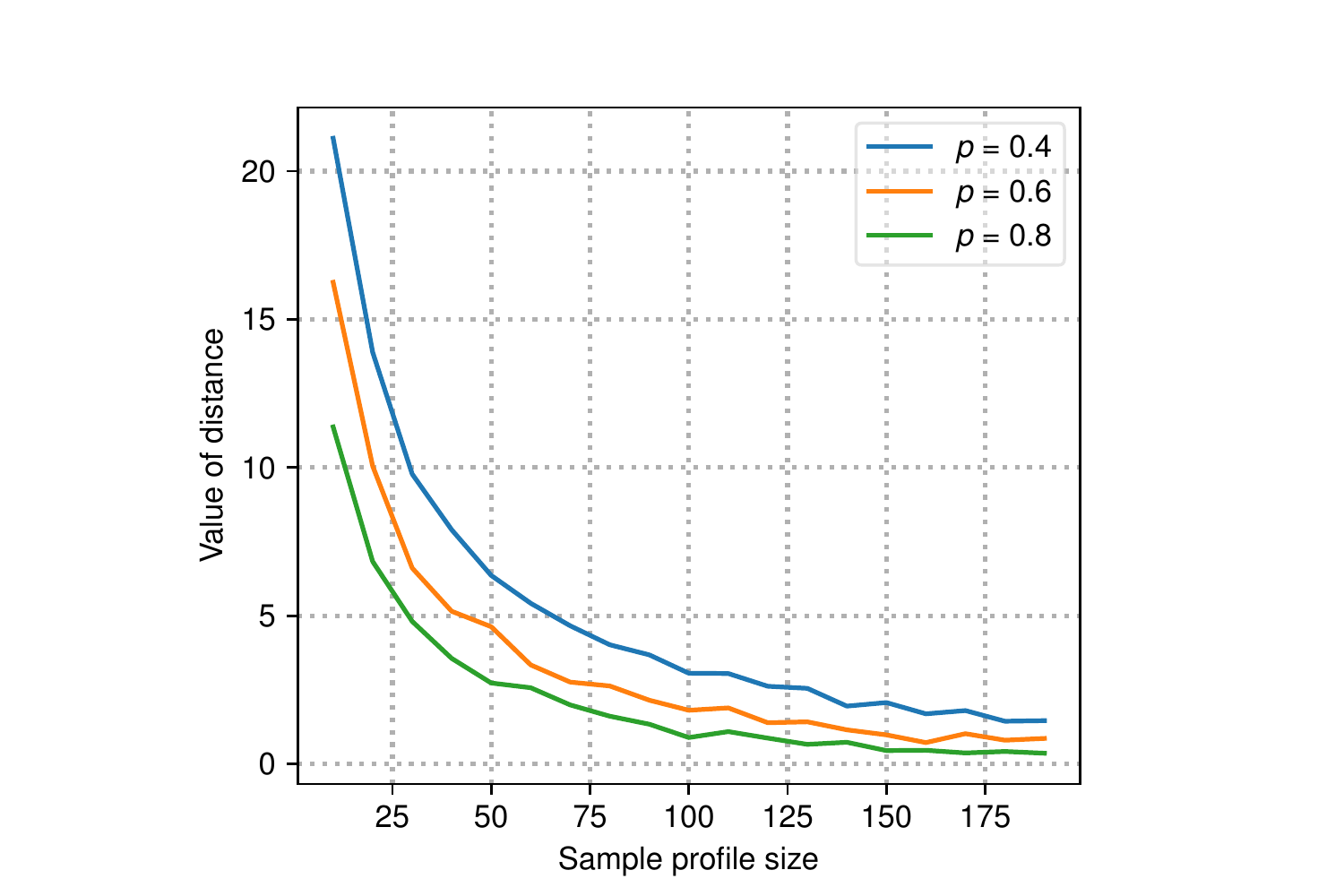}
    \caption{Average Kendall Tau distance between the output of $\posest$ and the central ranking with respect to the size of the sample profile, for different values of the frequency parameter $p$, when $n=20$, $\beta = 0.3$.}\label{fig:PDadv}
\end{figure}

\subsubsection*{Acknowledgements}
The authors would like to thank the anonymous reviewers for their valuable comments and suggestions.

Dimitris Fotakis and Alkis Kalavasis are partially supported by the Hellenic Foundation for Research and Innovation (H.F.R.I.) under the ``First Call for H.F.R.I. Research Projects to support Faculty members and Researchers and the procurement of high-cost research equipment grant'',  project BALSAM, HFRI-FM17-1424.

\newpage

\bibliography{refs}

\appendix
\appendixpage

\section{Proof of Theorem \ref{thm:upper_central}}\label{s:pf1}
\label{appendix:start}
The proof of Theorem \ref{thm:upper_central} follows the same steps as the proof provided by \citet{caragiannis2013noisy} for the upper bound of the sample complexity of finding the central ranking given complete Mallows samples.

Assume that $\Scal$ is $p$-frequent and $\prof\sim\MadvS$. Also, without loss of generality, let $\central$ be the identity permutation $\id$. For the following, we denote with $\numle[i,j]=\numle[i,j][\prof]$ the number of samples where $i\succ j$, for any $i,j\in[n]$ and with $\num[i,j]=\num[i,j](\Scal)$ the total number of samples where they both appear. We, then, have that:
\[
    \pr[\positional[\prof]\neq \central] \le \pr[\exists i<j: \numle[i,j]<\numle[j,i] ]\le \sum_{i<j}\pr[\numle[i,j]<\numle[j,i]]\,.
\]
For any fixed pair $i<j$, the probability that $\numle[i,j]<\numle[j,i]$ is maximized in the case when they are adjacent in $\central$. In this case, $\numle[i,j]=\sum_{\ell\in\num[i,j]}X_\ell$, where $X_\ell\sim\be(e^{-\b}/(1+e^{-\b})),\forall \ell\in[\num[i,j]]$. Let $Y_\ell=1-X_\ell,\forall \ell\in[\num[i,j]]$. Then, we have:
\[
    \pr[\numle[i,j]<\numle[j,i]]\le \pr[\sum_{\ell\in[\num[i,j]]}X_\ell-Y_\ell\ge 0] \le \exp(-2\num[i,j](\frac{1-e^{-\b}}{1+e^{-\b}})^2) \le \exp(-2p{\r}(\frac{1-e^{-\b}}{1+e^{-\b}})^2),
\]
where the second inequality follows from the Hoeffding bound and the third from the fact that $\Scal$ is $p$-frequent.

Demanding that the last term is less than $\ep$ and solving for ${\r}$ concludes the proof of Theorem \ref{thm:upper_central}. 

\section{Proof of Theorem \ref{thm:lower_central}}
\label{s:pf2}

Let $\Scal$ contain $p{\r}$ full sets and $(1-p){\r}$ sets of size at most $n\sqrt{p/(1-p)}$. For any $i,j\in[n]$, let $\num[i,j](\Scal)$ be the number of sets of $\Scal$ containing both $i$ and $j$, that is, the number of the appearances of pair $(i,j)$.

Clearly, $\Scal$ is $p$-frequent and: 
\begin{equation}\label{eqap:bounded_appearances}
    \sum_{i<j}\num[i,j](\Scal)\le pn^2{\r},
\end{equation} 
since each full set has no more than $n^2/2$ pairs of alternatives and each of the remaining sets have no more than $n^2p/(2(1-p))$ pairs.

Assume that ${\r}<\frac{1}{8p\b}\log(\frac{n(1-\ep)}{4\ep})$. From Eq.~\eqref{eqap:bounded_appearances} we get that:
\begin{equation}\label{eqap:bounded_appearances2}
    \sum_{i<j}\num[i,j](\Scal)< \frac{n^2}{8\b}\log\left(\frac{n(1-\ep)}{4\ep}\right)\,.
\end{equation} 

We will show that there exists a set of $n/2$ disjoint pairs of alternatives which we observe only a few times in the samples. For simplicity, assume that $n/2\in\Nn$. Consider the following family $\{P_t\}_{t\in[n/2]}$ of perfect matchings on the set of alternatives, that is, $n/2$ sets of $n/2$ disjoint pairs:
\begin{align*}
    P_1 = \{(1,2),(3,&4),\dots,(n-1,n)\}\,, \\
    P_2 = \{(1,4),(3,&6),\dots,(n-1,2)\}\,, \\
    & \dots \\
    P_t = \{(1,(2t)\mod n),\dots, (&n-1,(2t+n-2)\mod n)\}\,, \\
    & \dots \\
    P_{n/2} = \{(1,n),(3,2&),\dots,(n-1,n-2)\}\,.
\end{align*}

Observe that no pair of alternatives appears in more than one perfect matching of the above family. Therefore:

\begin{equation}\label{eqap:bounded_appearances_matchings}
    \sum_{t\in[n/2]}\sum_{(i,j)\in P_t}\num[i,j](\Scal)\le\sum_{i<j}\num[i,j](\Scal)\,.
\end{equation}

Therefore, combining Eq.~\eqref{eqap:bounded_appearances2} and Eq.~\eqref{eqap:bounded_appearances_matchings}, we get that:

\begin{equation*}\label{eqap:bounded_appearances_matching}
    \exists t\in[n/2]: \sum_{(i,j)\in P_t}\num[i,j](\Scal) < \frac{n}{4\b}\log\left(\frac{n(1-\ep)}{4\ep}\right)\,.
\end{equation*}

Therefore, since $|P_t|=[n/2]$, there exist at least $n/4$ pairs $(i,j)\in P_t$ with:
\begin{equation}\label{eqap:bounded_appearances_pairs}
    \num[i,j](\Scal) < \frac{1}{\b}\log\left(\frac{n(1-\ep)}{4\ep}\right)\,.
\end{equation}

We proceed with an information-theoretic argument which is based on the observation that if the pairs of $P=P_t$, for which Eq.~\eqref{eqap:bounded_appearances_pairs} holds, are adjacent in the central ranking, then the probability of swap is maximized for each pair and also knowledge of the pairwise orders of any fraction of the pairs does not give information about the pairwise order of any remaining pair. For simplicity and without loss of generality, assume that $P=P_1$, 

For the selection vector $\Scal = (S_1,\dots, S_{|\Scal|})$ we denote with $\sym(\Scal)$ the support of $\MadvS$, namely the set of vectors containing in any position $\ell\in[|\Scal|]$ a permutation of $\Set_\ell$. For the following, $\fprof$ will be used to denote elements of $\sym(\Scal)$, while $\prof\sim\MadvS$ will be a random variable.

Let $\estim$ be any (randomized) algorithm for estimating the central ranking. With the notation $\pr[A]$ we refer to the probability of the event $A$, taking into consideration any randomness involved in $A$ (for example the randomness of $\prof$ and $\estim$). Also, let $\F\subse \S_n$ be as follows:

\begin{equation}\label{eqap:perm_family}
    \F = \{\sample\in\S_n: \{\sample(2i-1),\sample(2i)\}=\{2i-1,2i\}\}\,.
\end{equation}

For example, if $n=4$ then $\F = \{1\succ 2\succ 3\succ 4, 1\succ 2\succ 4\succ 3, 2\succ 1\succ 3\succ 4, 2\succ 1\succ 4\succ 3\}$.

Fix $\sample\in\F$. For any $\sample'\in\F$, let $D(\sample')$ denote the number of pairwise disagreements between $\sample$ and $\sample'$ on the elements of $P$:
\[
    D(\sample') = \{(i,j)\in P: (\sample(i)-\sample(j))(\sample'(i)-\sample'(j))<0\}\,.
\]

Fix $\fprof\in\sym(\Scal)$. Then, assuming the following notation: 
\[
    \pr[\fprof|\sample] := \pr_{\prof\sim\M_{\sample,\b}^\Scal}[\prof = \fprof],
\]
we apply the triangle inequality property of Kendall tau distance and get that:

\begin{equation}\label{eqap:obs_prob_bound}
    \pr[\fprof|\sample]\ge e^{-\b\sum_{(i,j)\in D(\sample')}\num[i,j](\Scal)}\pr[\fprof|\sample']\,.
\end{equation}

Also, since the estimator $\estim$ must have a single output:

\begin{equation}\label{eqap:single_output}
    \sum_{\sample'\in\F}\pr[\estim[\fprof]=\sample']\le 1\,. 
\end{equation}

We multiply the terms of Ineq.~\eqref{eqap:single_output} with $\pr[\fprof|\sample]$, apply Ineq.~\eqref{eqap:obs_prob_bound} and sum over all $\fprof\in\sym(\Scal)$ to get:

\begin{align*}
    \sum_{\sample'\in\F}&e^{-\b\sum_{(i,j)\in D(\sample')}\num[i,j](\Scal)} \sum_{\fprof\in\sym(\Scal)}\pr[\estim[\fprof]=\sample']\pr[\fprof|\sample']\le \sum_{\fprof\in\F}\pr[\fprof|\sample]\le 1\,.
\end{align*}

Assume, for contradiction, that for every $\sample'\in\F$ it holds:
\begin{align*}
    \pr_{\prof\sim\M_{\sample,\b}^\Scal}[\estim[\prof]=\sample']
    \ge 1-\ep\,. 
\end{align*}

Then, since it holds that $\pr_{\prof\sim\M_{\sample,\b}^\Scal}[\estim[\prof]=\sample'] = \sum_{\fprof\in\sym(\Scal)}\pr[\estim[\fprof]=\sample']\pr[\fprof|\sample']$, we get:

\begin{align*}
    (1-\ep)\sum_{\sample'\in\F}&e^{-\b\sum_{(i,j)\in D(\sample')}\num[i,j](\Scal)} \le 1\,.
\end{align*}

However, from Ineq.~\eqref{eqap:bounded_appearances_pairs}, we get that:
\[
    \sum_{\sample'\in\F} e^{-\b\sum_{(i,j)\in D(\sample')}\num[i,j](\Scal)} > 1+\frac{n}{4}\frac{4\ep}{n(1-\ep)}\,.
\]

We conclude that: $1-\ep+\ep<1,$ contradiction.

\section{Proof of Theorem \ref{thm:approx}}\label{s:pf3}

For the following, for any ranking $\pi\in\S_n$ and $S\subse [n]$, let $\pi|_S$ denote the reduced central ranking of $\pi$ to $S$, that is, the permutation\footnote{Specifically, for a permutation $\pi : [n] \rightarrow [n]$ and $S \subseteq [n]$, the mapping $\pi|_S$ is a bijection from $S$ to $[|S|].$} of the elements of $S$ that agrees with their order in $\pi$.

Assume that $\prof\sim\MadvS$, where $\Scal$ is $p$-frequent. The proof of Theorem \ref{thm:approx} is based on the definition of a notion of neighborhood for each alternative $i\in[n]$. In particular, for every $i\in[n]$, $L>0$ and $\lambda>0$ we define $\N_i(L,\lambda)=\N_i(L,\lambda)[\prof]$ to be the subset of $[n]$ containing all the alternatives $j$ for which there exist at least $r/\lambda$ sets $S$ of $\Scal$ for each of which it holds that $|\central|_S(i)-\central|_S(j)|\le L$.

Observe that the neighborhoods are formed according to the input data but they are unknown to the algorithm, since $\central$ is unknown.

Furthermore, the following Lemma holds, that controls the neighborhoods' size.

\begin{lemma}\label{lem:neigh_len}
For every $i\in[n]$, $L>0$, $\lambda>0$, it holds that:
\[
    |\N_i(L,\lambda)|\le 2\lambda L\,.
\]
\end{lemma}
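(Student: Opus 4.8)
The plan is to bound $|\N_i(L,\lambda)|$ by a double-counting argument over the sets of $\Scal$. Recall that, by definition, each $j\in\N_i(L,\lambda)$ contributes to at least $r/\lambda$ sets $S\in\Scal$ in which $|\central|_S(i)-\central|_S(j)|\le L$. The key observation is that, for a fixed set $S\in\Scal$ containing $i$, the number of alternatives $j\in S$ with $|\central|_S(i)-\central|_S(j)|\le L$ is at most $2L$: the positions of these $j$ under the reduced ranking $\central|_S$ must lie in the interval $[\central|_S(i)-L,\central|_S(i)+L]$, which contains at most $2L$ integers other than $\central|_S(i)$ itself (and $\central|_S$ is injective on $S$).

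First I would set up the count. Form the set $T=\{(j,S): j\in\N_i(L,\lambda),\ S\in\Scal,\ i,j\in S,\ |\central|_S(i)-\central|_S(j)|\le L\}$. Counting $T$ by summing over $j\in\N_i(L,\lambda)$ and using the defining property of the neighborhood gives the lower bound $|T|\ge |\N_i(L,\lambda)|\cdot r/\lambda$. Counting $T$ by summing over $S\in\Scal$ and using the interval argument above gives the upper bound $|T|\le r\cdot 2L$, since there are $r=|\Scal|$ sets in total and each contributes at most $2L$ pairs. Combining the two bounds yields $|\N_i(L,\lambda)|\cdot r/\lambda\le 2Lr$, i.e. $|\N_i(L,\lambda)|\le 2\lambda L$, which is exactly the claim.

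There is essentially no obstacle here; the only point that needs a touch of care is the interval-counting step, namely verifying that an integer interval of the form $[a-L,a+L]$ contains at most $2L+1$ integers, so at most $2L$ distinct from $a$, and that $\central|_S$ being a bijection onto $[|S|]$ forces distinct $j$'s to occupy distinct positions. One should also note that only sets $S$ with $i\in S$ can contribute, which only helps, and that $L$ and $\lambda$ need not be integers — the bound $2\lambda L$ is stated without floors, which is fine since we are giving an upper bound. I would present the argument as a clean two-way count with these two short justifications inlined.
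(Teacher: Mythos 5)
Your double-counting argument is correct and is exactly the paper's proof, just spelled out: the paper's one-line justification ("at most $2rL$ total positions for the neighbors of $i$, each neighbor takes at least $r/\lambda$ of them") is precisely your two-way count of the pairs $(j,S)$. No issues.
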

\begin{proof}
There are at most $2{\r}L$ total positions for the neighbors of $i$ and each neighbor takes at least ${\r}/\lambda$ of them.
\end{proof}

We now prove that we can pick $L$ and $\lambda$ so that, with high probability, for any $i\in[n]$, every element outside its neighborhood $\N_i(L,\lambda)$ is ranked correctly relatively to $i$ in the majority of samples where they both appear. More specifically, the following Lemma holds:

\begin{lemma}\label{lem:correct_majority_orders}
Assume that $\prof=\samples\sim\MadvS$, $c\in(0,1/2]$, $c$ constant and $\ep\in(0,1)$. Then, there exists some constant $C=C(c)$ such that, by considering:
\[
    L = \frac{C}{\b}+\frac{2}{\b c p r}\log(2n^2/\ep) \text{ and } \lambda = \frac{2}{(1-c)p},
\]
it holds that:
\[
    \numle[i,j]\ge (1-c)\num[i,j]\,,
\]
for every $i\in[n]$ and for every $j\in[n]\setminus (\N_i(L,\lambda)\cup \{i\})$ with probability at least $1-\ep$.
\end{lemma}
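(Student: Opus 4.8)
The plan is to fix a pair $i, j$ with $j \notin \N_i(L,\lambda) \cup \{i\}$ and bound the probability of the ``bad event'' $\numle[i,j] < (1-c)\num[i,j]$, then conclude by a union bound over all $O(n^2)$ ordered pairs. First I would partition the $\num[i,j]$ samples in which both $i$ and $j$ appear into two groups according to the reduced central ranking: the ``close'' samples, where $|\central|_S(i) - \central|_S(j)| \le L$, and the ``far'' samples, where this displacement exceeds $L$. Since $j \notin \N_i(L,\lambda)$, by the definition of the neighborhood there are fewer than $r/\lambda$ close samples; using $\lambda = \frac{2}{(1-c)p}$ and the fact that $\num[i,j] \ge pr$ (from $p$-frequency), this means the number of close samples is at most $\frac{(1-c)p r}{2} \le \frac{(1-c)}{2}\num[i,j]$. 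So at least a $1 - \frac{1-c}{2} = \frac{1+c}{2}$ fraction of the relevant samples are far.

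Second, in each far sample $S$, the reduced central ranking places $i$ and $j$ at distance more than $L$, so by the displacement bound for the Mallows model (the $2e^{-\b t}$ bound from \citet{bhatnagar2015lengths}, applied as in the displayed inequality $\pr[m \succ_{\sample_1} 1] \le 2e^{-\b m/2}$ in Section~\ref{s:approx}), the probability that $i$ and $j$ appear swapped in that sample is at most $2e^{-\b L/2}$. With $L = \frac{C}{\b} + \frac{2}{\b c p r}\log(2n^2/\ep)$, choosing the constant $C = C(c)$ large enough makes $2e^{-\b L/2} \le \frac{1}{2}e^{-\frac{1}{cpr}\log(2n^2/\ep)}$, i.e., an error probability per far sample that is both bounded below $1/2$ by a constant margin depending on $c$ and exponentially small in the right quantity. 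Now the number of wrong orderings among the far samples is stochastically dominated by a sum of independent Bernoulli variables (the samples are independent given $\Scal$), so a Chernoff/Hoeffding bound shows that the number of far samples in which $i$ beats $j$ incorrectly exceeds, say, $\frac{c}{2}\num[i,j]$ only with probability at most $\exp(-\Omega(c \cdot (\text{number of far samples}) \cdot (\tfrac12 - \text{const})))$; pushing the constants through, and using that the number of far samples is $\Omega(\num[i,j]) = \Omega(pr)$, this probability is at most $\ep/(2n^2)$.

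Third, I would combine the counts: with probability at least $1 - \ep/(2n^2)$, at most $\frac{1-c}{2}\num[i,j]$ samples are close (hence possibly wrong) and at most $\frac{c}{2}\num[i,j]$ far samples are wrong, so the total number of samples in which $i$ is ranked after $j$ is at most $\frac{1-c}{2}\num[i,j] + \frac{c}{2}\num[i,j] = \frac{1}{2}\num[i,j] \le (1-c)\num[i,j]$ wait --- more carefully, $\numle[j,i] \le \frac{1-c+c}{2}\num[i,j]$ is too lossy; instead I bound $\numle[i,j] \ge \num[i,j] - (\text{close samples}) - (\text{wrong far samples}) \ge \num[i,j] - \frac{1-c}{2}\num[i,j] - \frac{c}{2}\num[i,j] $, and after re-tuning the slack allotted to the far-sample Chernoff bound (taking it to be $c' \num[i,j]$ for a suitable $c'$ so that $\frac{1-c}{2} + c' \le c$), this gives $\numle[i,j] \ge (1-c)\num[i,j]$. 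Finally, a union bound over the at most $n^2$ ordered pairs $(i,j)$ (in fact $\binom{n}{2}$ unordered ones, but $n^2$ is a safe overestimate, which is why the logarithmic term carries $2n^2/\ep$) yields the claim with probability at least $1 - \ep$. The main obstacle is the bookkeeping in the third step: one must allocate the allowed failure fraction $c$ between the ``close, unknowable'' samples and the ``far, Chernoff-controlled'' samples, and choose $\lambda$, $L$, and the Chernoff slack consistently so that their sum stays below $c\,\num[i,j]$ while the constant $C(c)$ in $L$ absorbs exactly the loss $2e^{-\b L/2} < \text{const} \cdot e^{-(1/cpr)\log(2n^2/\ep)}$ needed to make each per-pair failure probability at most $\ep/(2n^2)$.
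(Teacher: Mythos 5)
Your architecture is the same as the paper's: fix a pair with $j \notin \N_i(L,\lambda)\cup\{i\}$, split its co-occurrences into ``close'' and ``far'' samples according to the reduced central rankings, bound the number of close samples by $r/\lambda \le \frac{1-c}{2}\num[i,j]$ via the neighborhood definition and $p$-frequency, control each far sample's swap probability by $\rho \le 2e^{-\b L/2}$ using the displacement bound of \citet{bhatnagar2015lengths}, apply a Chernoff bound to the far samples, and union bound over the $O(n^2)$ pairs. The choice of $L$ forcing a per-pair failure probability of $\ep/(2n^2)$ is also exactly the paper's (which runs the Chernoff bound in KL form, $\exp(-|\I|\,\kl(c'\parallel\rho))$ with $\I$ the set of far indices; this is where both the $\frac{2}{\b c p r}\log(2n^2/\ep)$ term and the constant $C(c)$ originate).

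The genuine gap is in your final bookkeeping, and you half-see it yourself. Because you charge every close sample as a potential error, your budget constraint is $\frac{1-c}{2} + c' \le c$ for some Chernoff slack $c' > 0$; but $\frac{1-c}{2} \le c$ already forces $c \ge 1/3$, so no ``re-tuning of the slack'' can satisfy it for $c < 1/3$ while $\lambda$ is pinned at $\frac{2}{(1-c)p}$ (making the close samples as numerous as a $\frac{1-c}{2}$ fraction of $\num[i,j]$). As written, your argument therefore only yields the lemma for $c \in [1/3, 1/2]$ --- enough for the $c = 1/2$ majority argument behind Theorem~\ref{thm:approx}, but not for the $c < 1/4$ invoked in Part~2 of the proof of Theorem~\ref{thm:mle_alg}. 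The paper's own proof does the accounting differently: it never charges the close samples at all, instead rescaling the threshold onto the far samples by choosing $c' = c\lambda p/(\lambda p - 1) = \frac{2c}{1+c}$ so that $c'|\I| \ge c\num[i,j]$, and bounding only $\pr[\bin(|\I|,\rho) > c'|\I|]$. That sidesteps your infeasible constraint, but at the price of silently ignoring swaps occurring in the close samples; your more explicit decomposition exposes a real tension for small $c$ rather than a defect unique to your write-up. To make your accounting close for all $c \in (0,1/2]$, $\lambda$ would need to scale like $1/(cp)$ rather than $2/((1-c)p)$.
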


\begin{proof}
Fix $i\in[n]$ and $j\in [n]\setminus (\N_i(L)\cup \{i\})$. It is sufficient to show that the probability of the event that $\numle[j,i]>c\num[i,j]$ is less than $\ep/n^2$, since, in that case, the union of the corresponding events over all pairs $i',j'$ such that $j'\in [n]\setminus (\N_{i'}(L)\cup \{i'\})$ would hold with probability less than $\ep$.

We have that ${\r}\ge \num[i,j]\ge p{\r}$. From the selection of $j$ we have that the number of elements of the set $\I$ of indices $\ell\in[{\r}]$ such that: $|\central|_{S_\ell}(i)-\central|_{S_\ell}(j)|>L$ ($i$ and $j$ are initially distant and hence they do not appear swapped in the corresponding sample, with high probability) is at least $\num[i,j]-(1/\lambda){\r}$.

For some index $\lambda\in\I$ we have that the probability that $i$ and $j$ appear swapped in $\sample_\ell$ is upper bounded as follows, according to \citet{bhatnagar2015lengths}:
\[
    \rho = \pr[(\sample_\ell(i)-\sample_\ell(j))(i-j)<0] \le 2e^{-\b L/2}\,.
\]
Assume that $i<j$. We will show that in most of the $\num[i,j]$ samples where both $i$ and $j$ appear, it holds that $i \succ_{\pi_{\ell}} j$. That is, following the notation introduced in Section \ref{s:pf1} of the supplement:
\[
    \pr[\numle[i,j]>c\num[i,j]]<\ep/(2n^2)\,.
\]
Let $\numle[i,j]'$ be the random variable that corresponds to the number of indices $\ell\in\I$ for which $i\succ_{\sample_\ell}j$. Clearly:
\[
    \pr[\numle[i,j]>c\num[i,j]]\le \pr[\numle[i,j]'>c\num[i,j]]\,.
\]
Also, let $\num[i,j]'=|\I|(\ge \num[i,j]-(1/\lambda){\r})$. The random variable $\numle[i,j]'$ follows the Binomial distribution $\bin(\num[i,j]',\rho)$. We want to find some constant $c'=c'(c)\in(0,1)$ such that $c'\num[i,j]'\ge c\num[i,j]$, because, in this case, it suffices to bound the following probability:
\[
    \pr[\numle[i,j]'>c'\num[i,j]']\,.
\]
We pick $c'=c\lambda p/(\lambda p-1)$. For the selected $\lambda$, $c'$ is indeed a constant (since $c$ is considered a constant) taking some value within the interval $(0,1)$ (since $c\in(0,1)$) and from the Chernoff bound we get:
\[
    \pr[\numle[i,j]'>c'\num[i,j]']\le \exp(-\num[i,j]'\kl(c' \parallel \rho))\le \exp(-(p-1/\lambda)r\kl(c' \parallel \rho)),
\]
where $\kl(c' \parallel \rho) = c'\log(c'/\rho)+(1-c')\log((1-c')/(1-\rho))\ge c'\b L/2- C'$, where $C' = C'(c')>0$ is some positive constant.\footnote{Let $\mu, \nu$ be two discrete probability measures on $\Omega$. The Kullback–Leibler divergence between $\mu$ is defined as $\kl(\mu \parallel \nu ) = \sum_{x \in \Omega} \mu(x) \log(\frac{\mu(x)}{\nu(x)})$} For the selected $L$, if $C=2C'/c'$, we get that $\pr[\numle[i,j]'>c'\num[i,j]']\le \ep/(2n^2)$, concluding the proof of Lemma \ref{lem:correct_majority_orders}.
\end{proof}

We conclude the proof of Theorem \ref{thm:approx} by combining Lemmata \ref{lem:neigh_len} and \ref{lem:correct_majority_orders} to get that with probability at least $1-\ep$, for every alternative $i\in[n]$, the number of other alternatives with which $i$ is ordered reversely in the majority of samples where they both appear is upper bounded by $N$:
\[
    N = \frac{2C}{(1-c)p\b}+\frac{4}{\b c(1-c)pr}\log(2n^2/\ep)\,.
\]
Hence, after tie braking, the resulting permutation ranks each alternative no more than $2N$ places away from its position in $\central$. More specifically, if before breaking ties, for $i,j\in[n]$ we have $\positional(j)\le \positional(i)$ then: $\positional(j)+N\le \positional(i)+N$. However: $j \le \positional(j)+N$ and $\positional(i)+N\le i+2N $ therefore: $j\le i+2N\Rightarrow j-i\le 2N$. Therefore, after tie braking: $\positional(i)\le i+2N$. Symmetrically: $\positional(i)\ge i-2N$.

Combining the above results with Theorem \ref{thm:upper_central}, we get Remark \ref{rem:approx} which implies the results presented in Theorem \ref{thm:approx}, where we emphasize that the error of the approximation diminishes to $1/r$.

\section{Proof of Theorem \ref{thm:mle_alg}}\label{s:pf4}

Theorem \ref{thm:mle_alg} consists of two parts. The first one considers the runtime of an algorithm finding a likelier than nature estimation of the central ranking given $p$-frequent selective Mallows samples while the second one refers to solving the maximum likelihood estimation problem. Both parts are based on Lemma \ref{lem:maxscore} (which originates to the work of \citet{braverman2009sorting}).

\paragraph{Part 1.} A careful examination of the proof of Lemma \ref{lem:maxscore} (which can be found in \citep{braverman2009sorting}) reveals that if $\central\in\A=\{\sample\in\S_n:|\sample(i)-\positional(i)|\le N,\forall i\in[n]\}$, then, we can find a likelier than nature estimation $\mlenat$ in time $O(n\cdot N^2 \cdot 2^{6N})$. Picking $N$ according to Theorem \ref{thm:approx}, $\central$ is indeed an element of $\A$ and therefore, we get the desired result.

\paragraph{Part 2.} In this case, we want to show that $\mle\in\A'=\{\sample\in\S_n:|\sample(i)-\positional(i)|\le N',\forall i\in[n]\}$. We claim that with probability at least $1-\ep/2$: $\{|\mle(i)-\central(i)|\le K,\forall i\in[n]\}$ ($\mle$ and $\central$ are pointwise close) for some $K=O(\frac{1}{\b p^3}+\frac{1}{\b p^4 r}\log(n/\ep))$. Therefore, picking $N'=K+N=O(K)$, we have that $\mle\in\A'$, which gives the desired result.

To prove our claim, we generalize the proof that the maximum likelihood estimation of the central ranking from complete Mallows samples is pointwise close to the central ranking. 

Assume, without loss of generality, that $\pi_0= \id$. Let $h>0$ and $c\in(0,1/2),$ which will be defined later. For any $i,j\in[n]$, $\numle[i,j]$ is the number of samples where $i\succ j$ and $\num[i,j]$ is the number of samples where both $i$ and $j$ appear. Clearly, it holds that $\numle[i,j]+\numle[j,i] = \num[i,j]$. 

From Lemmata \ref{lem:correct_majority_orders} and \ref{lem:neigh_len}, with probability at least $1-\ep$, there exist $\lambda$ and $L$ such that for every alternative $i\in[n]$ and any constant $c\in(0,1/2]$ there exists some constant $C=C(c)$ such that :
\begin{enumerate}
    \item\label{enum:f1} $|\N_i(L,\lambda)|\le N = \frac{2C}{\b(1-c) p} + \frac{4}{\b(1-c)c p^2 r}\log(2n^2/\ep)$\,.
    \item\label{enum:f2} $j\in \{i+1,i+2,\dots,n\}\setminus \N_i(L,\lambda) \Rightarrow \numle[j,i] \le c\num[i,j]$ (and $\num[i,j] \ge  p r$).
    \item\label{enum:f3} Symmetrically, for $j\in\{1,\dots,i-1\}\setminus \N_i(L,\lambda)$: $\numle[i,j] \le c\num[i,j]$.
\end{enumerate}

Fix $i\in[n]$ such that $|\mle(i)-i|=K,$ where $K\ge h N$. Without loss of generality, assume $\mle(i)=i+K.$
It suffices to find values of $c$ and $ h$ that contradict the assumption $\mle(i)=i+K$.

Let $\St = \{j\in[n]: i\le\mle(j)<i+K\}$ and: $\St_1 = \{j\in\St: j<i\}$, $\St_2 = \{j\in\St:j\in\N_i(L,\lambda)\}$, $\St_3 = \{j\in\St: j>i \text{ and } j\not\in \N_i(L,\lambda)\}$. Apparently: $\St=\St_1\cup\St_2\cup\St_3$. 

Observe that since $\mle$ maximizes the following score function: 
\begin{align*}
    s: \,\, \S_n & \to \Nn \\
     \pi & \to s(\pi) = \sum_{i_1 \succ_\pi i_2}\numle[i_1,i_2] \,.
\end{align*}
It must hold that:
\[
    0\le \sum_{j\in\St}(\numle[j,i]-\numle[i,j]) \,.
\]

For any $j\in\St_3,$ from item (\ref{enum:f1}) it holds that: 
\[
    \numle[j,i]\le c\num[i,j]\Rightarrow \numle[j,i] - \numle[i,j] \le -(1-2c) p r\,.
\]
Let $|\St_1|=T$. Furthermore: $|\St_2|\le N$ and $|\St_3|\ge K-N-T\ge (  h-1)N-T$. Therefore, we have that:
\[
    0\le rT+rN-(1-2c) p r((h-1)N-T)\Rightarrow
\]
\begin{equation}\label{eqap:ineqT}
    T\ge \frac{(1-2c) p (  h-1)-1}{1+(1-2c) p}N\,.
\end{equation}

Observe that, since there are at least $T$ alternatives $j<i$ such that $\mle(j)\ge i$, say $T_1\subset[n]$, there must be at least $T$ alternatives $j\ge i$ such that $\mle(j)<i$, say $T_2\subset[n]$. Let $H_1=\{1,...,i-1\}$ and $H_2=\{i,...,n\}$. We construct $\sigma_0\in\S_n$ by concatenating $\mle|_{H_1}$ and $\mle|_{H_2}$. It remains to select appropriate values for $  h$ ($h$ does not need to be constant) and $c$ (must be constant) for which $s(\sigma_0)>s(\mle)$, which is a contradiction.

Create the following sets:
\begin{enumerate}
    \item $P_1$: The pairs of elements $i_1,i_2\in[n], i_1<i_2$ for which $i_2\in\N_{i_1}(L,\lambda)$ and $\sigma_0,\mle$ disagree on their relative ranking. Note that: $|P_1|\le 2TN$.
    \item $P_2$: The pairs of elements $i_1,i_2\in[n], i_1<i_2$ for which $\sigma_0,\mle$ disagree, but $i_2\not\in\N_{i_1}(L,\lambda)$ (and $i_1\not\in\N_{i_2}(L,\lambda)$). Note that $\sigma_0$ has the right answer for this pair and $q(i_1\succ i_2)-q(i_2\succ i_1)\ge (1-2c)pr$. Also: $|P_2|\ge T(T-N)$ (select an element of $T_1$ and an element of $T_2$ which is not in the first element's neighborhood).
\end{enumerate}

Then: $s(\sigma_0)-s(\mle)=\sum_{(i_1,i_2)\in P_1}(\numle[i_1,i_2]-\numle[i_2,i_1])+\sum_{(i_1,i_2)\in P_2}(\numle[i_1,i_2]-\numle[i_2,i_1])\ge -2rTN+(1-2c)prT(T-N)=rT((1-2c)pT-((1-2c)p+2)N )$\,.

Using Ineq.~(\ref{eqap:ineqT}), we get that:

\begin{equation*}
    s(\sigma_0)-s(\mle) \ge rTN\left[ \frac{(1-2c)p((1-2c)p(  h-1)-1)}{1+(1-2c)p}-(2+(1-2c)p) \right]\,.
\end{equation*}

We search for values of $c$ and $  h$ such that the quantity inside the brackets is positive. After some algebra, we choose $c<1/4$ (constant) and $  h=2+8/p+8/p^2=O(\frac{1}{p^2})$.

\section{Proof of Theorem \ref{thm:topk-up}}\label{s:pf5}

The proof we provide almost coincides with the sketch we provided in the main part. However, here we have established the appropriate notation that enables us to be more formal.

Let $\prof\sim\MadvS$ be our sample profile. We will make use of the {\posest}  $\positional = \positional[\prof]$ and, without loss of generality, assume that $\central$ is the identity permutation $\id$. We will bound the probability that there exists some $i\in[k]$ such that $\positional(i)\neq i$.

For any $i\in[k]$, we can partition the remaining alternatives into $A_1(i) = \N_i(L,\lambda)$ and $A_2(i)=[n]\setminus (A_1(i)\cup\{i\})$.

From Lemma \ref{lem:correct_majority_orders}, it holds that for some $L$, $\lambda$ such that $|A_1(i)|=O(\frac{1}{p\b}+\frac{1}{p^2\b |\Scal|}\log(n/\ep))$, with probability at least $1-\ep/2$, for every $i\in[n]$, for every alternative $j\in A_2(i)$ it holds: $(\numle[i,j]-\numle[j,i])(j-i)>0$.

Picking $L$, $\lambda$ so that the above result holds, there exists some $r_1=O(\frac{1}{p^2\b k}\log(n/\ep))$ such that, if $|\Scal| \geq r_1$, then $|A_1(i)| = O(k)$. 

Furthermore, following the same technique used to prove Theorem \ref{thm:upper_central}, we get that, if $|\Scal|\ge r_2$ for some $r_2 = O(\frac{1}{p(1-e^{-b})^2}\log(k/\ep))$, then, with probability at least $1-\ep/2$, for every pair of alternatives $(i,j)$ such that $i\in[k]$ and $j\in A_1(i)$ it holds that: $(\numle[i,j]-\numle[j,i])(j-i)>0$, since the total number of such pairs is $O(k^2)$.

Therefore, with probability at least $1-\ep$, both events hold and for any fixed $i\in[k]$, $\positional(i)=i$, because for all $j$: $(\numle[i,j]-\numle[j,i])(j-i)>0$ and also because for every other alternative $j>k$, we have that $\positional(j)>k$, since $\numle[i,j]>\numle[j,i],\forall i\in[k]$.

\section{Randomized $p$-frequent condition}
\label{appendix:rand}

In the main part of the paper, we focus on the class of $p$-frequent queries with $p > 0$. An interesting extension is to deal with random query sets, which may fall in the case where $p = 0$. In fact, we
can consider a randomized setting, where there is a distribution over selection sets and the selection
sequence consists of independent samples from this distribution. The corresponding “$p$-frequent” condition is that
each pair appears with probability at least $p$ in a random selection set. This setting smoothly relaxes the $p$-frequent
setting. We now formally define this relaxed randomized condition.

We consider an alternative assumption to the $p$-frequent one, namely the randomized $p$-frequent assumption and we present empirical data that indicate that at least some of our results under the $p$-frequent assumption continue to hold under the randomized one.

\paragraph{Definition.} We say that a distribution $\D$ supported on $2^{[n]}$ is $p$-frequent if for any $i,j\in[n]$ it holds:
\[
    \pr_{S\sim \D}[S\ni i,j] \ge p\,.
\] 
If the selection sets are independently drawn according to some $p$-frequent distribution, then we say that the selective Mallows model is randomly $p$-frequent. Note that the randomized $p$-frequent assumption, contrary to the simple $p$-frequent assumption, permits the event that some pair of alternatives never appear together in the samples, yet with small probability.

\paragraph{Empirical evaluation.} We estimate the sample complexity of retrieving the central ranking from randomly selective Mallows samples where $n=20$ and $\beta=2$, with probability at least $0.95$, using $\posest$ by performing binary search over the size of the sample profile. During a binary search, for every value, say $r$, of the sample profile size we examine, we estimate the probability that $\posest$ outputs the central ranking by drawing $100$ independent randomly $p$-frequent selective Mallows profiles of size $r$, computing $\posest$ for each one of them and counting successes. We then compare the empirical success rate to $0.95$ and proceed with our binary search accordingly. For a specific value of $p$, we estimate the corresponding sample complexity, by performing $50$ independent binary searches and computing the average value. The results, which are shown in Figure \ref{fig:SCrandom}, indicate that the dependence of sample complexity on the frequency parameter $p$ is indeed $\Theta(1/p)$.

\begin{figure}[ht]
    \centering
    \includegraphics[width=.75\linewidth]{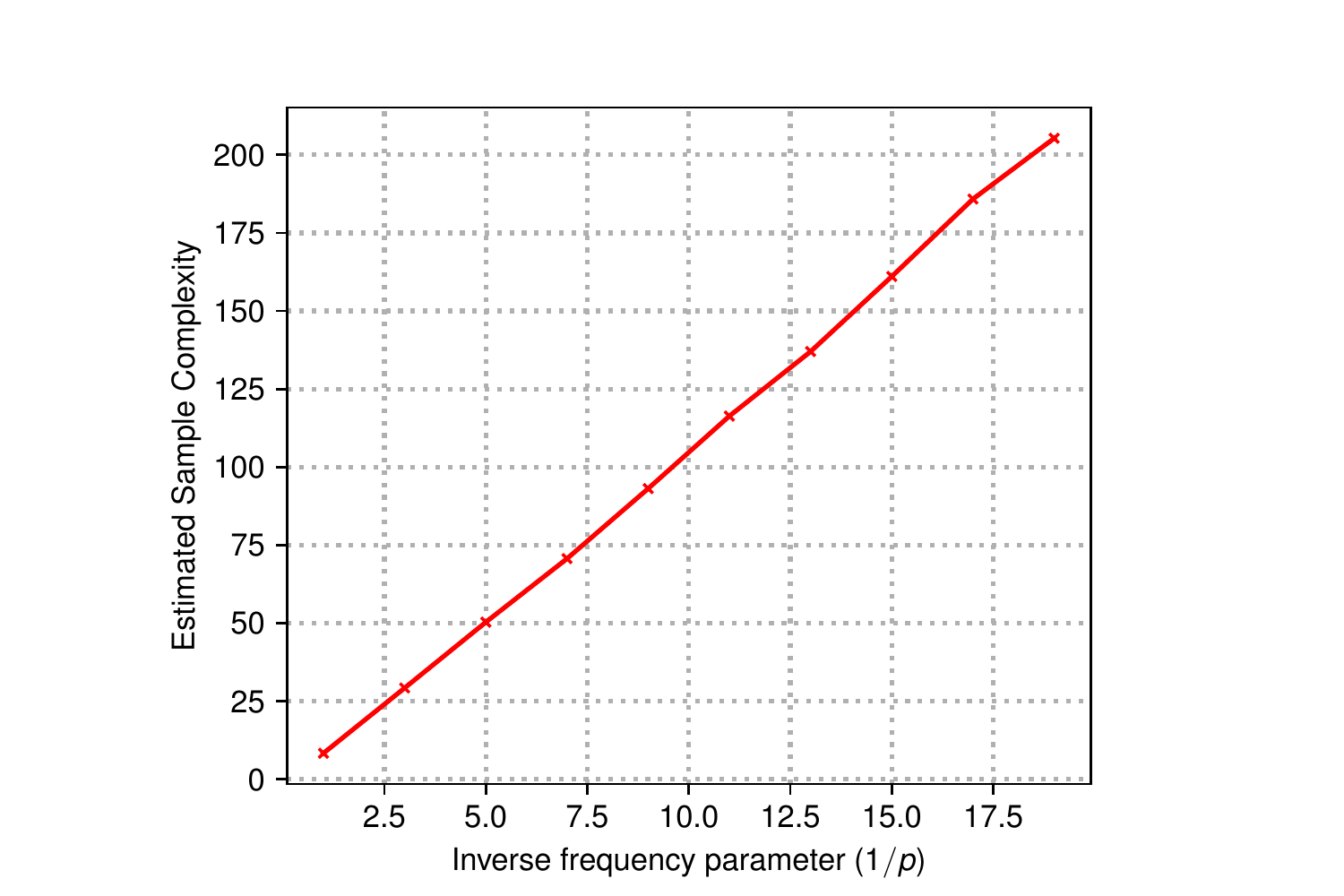}
    \caption{Estimated sample complexity of retrieving, with probability at least $0.95$ and using $\posest$, the central ranking from selective Mallows samples, with $n=20$, $\beta=2$, over the randomized frequency parameter's inverse.}\label{fig:SCrandom}
\end{figure}

\end{document}